\def\eqref#1{equation~\ref{#1}}
\def\1{\bm{1}}
\DeclareMathAlphabet{\mathsfit}{\encodingdefault}{\sfdefault}{m}{sl}
\SetMathAlphabet{\mathsfit}{bold}{\encodingdefault}{\sfdefault}{bx}{n}
\newcommand{\KL}{D_{\mathrm{KL}}}
\def\ie{\emph{i.e.,}}
\DeclareMathOperator*{\expec}{\mathbb{E}}
\theoremstyle{plain}
\newtheorem{theorem}{Theorem}[section]
\newtheorem{informaltheorem}{Theorem (Informal)}[section]
\theoremstyle{definition}
\newtheorem{assumption}[theorem]{Assumption}
\theoremstyle{remark}
\title{Scaling Offline Model-Based RL via Jointly-Optimized World-Action Model Pretraining}
\author{Jie Cheng$^{1,2}$,
  Ruixi Qiao$^{1,2}$,
  Yingwei Ma$^{3}$,
  Binhua Li$^{3}$, \\
  \textbf{Gang Xiong}$^{1,2}$,
  \textbf{Qinghai Miao}$^{2}$,
  \textbf{Yongbin Li}$^{3*}$,
  \textbf{Yisheng Lv}$^{1,2}$\thanks{Equal corresponding author} \\
  $^{1}$State Key Laboratory of Multimodal Artificial Intelligence Systems,\\  ~~Institute of Automation, Chinese Academy of Sciences\\
  $^{2}$School of Artificial Intelligence, University of Chinese Academy of Sciences\\
  $^{3}$Alibaba Group
}
\begin{document}

\maketitle

\begin{abstract}
A significant aspiration of offline reinforcement learning (RL) is to develop a generalist agent with high capabilities from large and heterogeneous datasets. However, prior approaches that scale offline RL either rely heavily on expert trajectories or struggle to generalize to diverse unseen tasks. Inspired by the excellent generalization of world model in conditional video generation, we explore the potential of image observation-based world model for scaling offline RL and enhancing generalization on novel tasks. In this paper, we introduce JOWA: \textbf{J}ointly-\textbf{O}ptimized \textbf{W}orld-\textbf{A}ction model, an offline model-based RL agent pretrained on multiple Atari games with 6 billion tokens data to learn general-purpose representation and decision-making ability. Our method jointly optimizes a world-action model through a shared transformer backbone, which stabilize temporal difference learning with large models during pretraining. Moreover, we propose a provably efficient and parallelizable planning algorithm to compensate for the Q-value estimation error and thus search out better policies. Experimental results indicate that our largest agent, with 150 million parameters, achieves 78.9\% human-level performance on pretrained games using only 10\% subsampled offline data, outperforming existing state-of-the-art large-scale offline RL baselines by 71.4\% on averange. Furthermore, JOWA scales favorably with model capacity and can sample-efficiently transfer to novel games using only 5k offline fine-tuning data (approximately 4 trajectories) per game, demonstrating superior generalization. The code and checkpoints will be released at \url{https://github.com/CJReinforce/JOWA}.
\end{abstract}
\section{Introduction}
In recent years, building large-scale generalist models capable of solving multiple tasks has become a dominant research focus in natural language processing (NLP) and multi-modality~\citep{yue2024sc}.  Their success is largely driven by the scaling law~\citep{kaplan2020scaling}, which posits that increasing model size and data leads to improved performance. However, similar scaling trends have not been extensively observed in reinforcement learning (RL).

Unlike in vision and language domains, RL has traditionally favored smaller models tailored to single tasks or multiple tasks within the same environment. Concerningly, previous studies have shown that scaling model capacity can lead to instabilities or performance degradation~\citep{kumar2020implicit, ota2021training, sokar2023dormant}. While some efforts have been made to scale offline RL across multiple tasks~\citep{lee2022multi, xu2022prompting}, they predominantly rely on supervised learning (SL) approaches, such as conditional behavior cloning, rather than temporal difference (TD) learning, and heavily rely on large amounts of expert trajectories. \cite{kumar2023offline} scaled offline Q-learning using ResNet-based representation network with separate Q-heads for each game, but this approach only learns generalizable representations and still requires substantial data and gradient steps to adapt to unseen games due to reset of Q-heads during fine-tuning. Therefore, \textit{scaling TD-based offline RL for simultaneous \textbf{general-purpose representation and decision-making} remains a critical challenge.} While \cite{hansen2024tdmpc2} attempted to address this challenge for continuous control tasks with low-dimensional proprioceptive states using model-based RL, it lacks generalization due to heterogeneous proprioceptors across task domains.

Meanwhile, world model has decoupled from model-based RL and evolved into a distinct research area within computer vision and multi-modality, primarily focusing on conditional video generation~\citep{hong2022cogvideo, blattmann2023stable, yu2023magvit, yang2024generalized, bruce2024genie}. Notably, SORA~\citep{videoworldsimulators2024} has demonstrated superior generalization performance as a world simulator through large-scale training of generative models on time-series image data. This motivates our investigation into a compelling question: ``\textit{Can image observation-based world model scale offline RL across multiple tasks while enhancing generalization to diverse unseen tasks?}"

To address this question, we introduce JOWA: \textbf{J}ointly-\textbf{O}ptimized \textbf{W}orld-\textbf{A}ction model, an offline model-based RL agent pretrained across multiple visual games with approximately 6 billion tokens data. Crucially, JOWA unlocks scaling trends and achieves sample-efficient adaption to novel games. By utilizing a shared transformer backbone for both the world model and Q-value criticism, JOWA learns both generalizable representations and decision-making skills. This architecture allows the transformer to absorb gradients back-propagated from both world modeling and TD losses, enabling joint optimization. The world modeling loss acts as a regularizer, stabilizing TD-learning for large models. Additionally, we propose a provably efficient and parallelizable planning algorithm to compensate for the Q-value estimation error, allowing for consistent identification of the optimal policy at inference time and sample-efficient transfer to novel games.

 To evaluate the performance of JOWA, we train a single model to play 15 Atari games, similar to \cite{lee2022multi} but using a reduced yet sufficient dataset of 10M transitions per game—termed as the low-data regime to highlight the data efficiency. This setup presents a significant challenge due to the unique dynamics, visuals, and agent embodiments of games. To further test JOWA's generalization, we perform offline fine-tuning on 5 unseen games, using minimal fine-tuning data.
 
 Our contributions are threefold: First, we introduce JOWA, an offline model-based RL method capable of training a single high-performing generalist agent across multiple Atari games. JOWA attains 78.9\% human-level performance on pretrained games using only 10\% of the original dataset~\citep{agarwal2020optimistic}, outperforming existing state-of-the-art large-scale offline RL baselines by 71.4\% on averange. Second, we demonstrate that JOWA unlocks scaling trends, with performance improving as model capacity increases. Third, JOWA enables sample-efficient transfer to diverse unseen games with 64.7\% DQN-normalized score using only 5k transitions per game, surpassing baselines by 69.9\% on average. Our ablation studies highlight the significance of two key design features of JOWA: joint optimization and planning, along with other training choices. We release all the training, evaluation, and fine-tuning code, as well as model weights, to support future research.
\vspace{-5pt}
\section{Related work}
\paragraph{Offline Reinforcement Learning.}
Offline RL algorithms learn a policy entirely from the static offline dataset without online interactions. Model-free offline RL incorporates conservatism to mitigate extrapolation error~\citep{jin2021pessimism} primarily through policy constraints~\citep{fujimoto2019off, kumar2019stabilizing, fujimoto2021minimalist, cheng2024rime, liu2024A2PR, liu2025skill} and value regularization~\citep{kumar2020conservative}. Model-based offline RL approximates the environment using world models and performs conservative policy optimization~\citep{yu2020mopo, yu2021combo}. While these works focus on single-task settings, our work explores scaling offline model-based RL across diverse, challenging multi-task Atari games~\citep{lee2022multi, kumar2023offline, wu2024elastic} aiming for sample-efficient transfer to novel games. 


\vspace{-5pt}
\paragraph{Mutli-Task Reinforcement Learning.}
Multi-task reinforcement learning (MTRL) aims to learn a shared policy for diverse tasks. A common approach is to formulate the multi-task model as task-condition, such as language-conditioned tasks~\citep{ahn2022can, jang2022bc} and goal-conditional RL~\citep{plappert2018multi}. In multi-task offline RL, conditional sequence modeling approaches based on decision transformer~\citep{chen2021decision} or diffusion model~\citep{janner2022planning} typically rely on large amounts of expert trajectories~\citep{xu2022prompting, lee2022multi, wu2024elastic, hu2024harmodt, he2024diffusion}.
Beyond that, FICC~\citep{ye2022become} pretrains multi-task representation and dynamic models with action-free videos and then fine-tunes a model-based agent on each task for fast adaptation. Scaled-QL~\citep{kumar2023offline} scales offline Q-learning using a shared feature network across tasks with separate Q-value heads for each task. Our work advances offline TD learning to multi-task settings without task-specific Q-value heads through a jointly-optimized world-action model. Table \ref{tab:comparison of methods} compares experimental environments and open-source status of multi-task offline RL algorithms. Following \cite{lee2022multi, kumar2023offline, wu2024elastic}, we focus on the multi-game regime, which presents greater challenges due to high-dimensional observations and diverse, stochastic environment dynamics. 

\begin{table}[htbp]
    \small
    \begin{center}
    \caption{Comparison of methods in multi-task offline RL. $\clubsuit$ and $\spadesuit$ represent two training paradigms of agents, conditional BC and TD-learning, respectively.}
    \vspace{-5pt}
    \scalebox{0.88}{
    \setlength{\tabcolsep}{1.0mm}{
    \begin{tabular}{l|ccc|cc|ccc}
    \specialrule{.1em}{.05em}{.05em}
        \multirow{3}{*}{Method} & \multicolumn{5}{c|}{Experimental environment} & \multicolumn{3}{c}{Open source} \\
        \cline{2-9}
         & \multirow{2}{*}{Benchmark} & Observation & Action & \multicolumn{2}{c|}{Dynamics} & \multirow{2}{*}{Train} & \multirow{2}{*}{Eval} & Check- \\
         \cline{5-6}
         & & space & space & per task & across tasks & & & points \\
        \hline 
        HarmoDT$\clubsuit$~\citep{hu2024harmodt} & Meta-World & \multirow{2}{*}{state} & \multirow{2}{*}{continous} & \multirow{2}{*}{deterministic} & same or & \ding{51} & \ding{51} & \ding{55} \\
        TD-MPC2$\spadesuit$~\citep{hansen2024tdmpc2} & or DMControl & & & & similar & \ding{51} & \ding{51} & \ding{51} \\
        \hline
        FICC$\spadesuit$~\citep{ye2022become}\footnotemark[1] & \multirow{5}{*}{Atari 2600} & \multirow{5}{*}{pixels} & \multirow{5}{*}{discrete} & \multirow{5}{*}{stochastic} & \multirow{5}{*}{diverse} & \ding{51}\rotatebox[origin=c]{-9.2}{\kern-0.7em\ding{55}} & \ding{55} & \ding{55} \\
        MGDT$\clubsuit$~\citep{lee2022multi} & & & & & & \ding{55} & \ding{51} & \ding{51} \\
        Elastic DT$\clubsuit$~\citep{wu2024elastic} & & & & & & \ding{51} & \ding{51} & \ding{55} \\
        Scaled-QL$\spadesuit$~\citep{kumar2023offline}\footnotemark[2] & & & & & & \ding{51}\rotatebox[origin=c]{-9.2}{\kern-0.7em\ding{55}} & \ding{51}\rotatebox[origin=c]{-9.2}{\kern-0.7em\ding{55}} & \ding{55} \\
        JOWA (ours)$\spadesuit$ & & & & & & \ding{51} & \ding{51} & \ding{51} \\
        \specialrule{.1em}{.05em}{.05em}
    \end{tabular}
    \label{tab:comparison of methods}}}
    \end{center}
    \vspace{-10pt}
\end{table}
\footnotetext[1]{FICC only open-sourced the world model pretraining code without the agent fine-tuning code.}
\footnotetext[2]{Scaled-QL released the preliminary code which is not run-able out-of-the-box.}


\section{Preliminaries and problem setup}
\subsection{Online distributional rl (C51)}
In distributional RL, the distribution $Z$ over returns replaces the Q-value in the Bellman optimality equation. The Q-value is the mean of the value distribution $Z$ that can be computed through a distributional Bellman optimality operator~\citep{bellemare2017distributional},
\begin{equation}
    \mathcal{T}^* Z(s, a) :\overset{D}{=} R(s, a) + \gamma Z(s', \arg\max_{a'}Q(s',a'))
    \label{eq:distribution tau}
\end{equation}
where the formula $Y :\overset{D}{=} U$ denotes equality of probability laws, that is the random variable $Y$ is distributed according to the same law as $U$. The C51 algorithm~\citep{bellemare2017distributional} models $Z(s,a)$ using a discrete distribution supported on $N$ fixed atoms $z_1 \leq \cdots \leq z_N$ uniformly spaced over a predetermined interval. Given a current value distribution, C51 applies a projection step to map the target distribution onto its finite element support and optimizes as follows:
\begin{equation}
    \mathcal{L}_\text{TD} = \displaystyle \KL ( \mathcal{T}^* Z_{\theta^-}(s,a) \Vert Z_{\theta}(s,a) )
\end{equation}

\subsection{Value regularization based Offline rl (CQL)}
To be conservatism on unseen actions, CQL~\citep{kumar2020conservative} introduces a regularizer to the TD-loss, which minimizes Q-values for unseen actions while maximizing Q-values for actions in the dataset to counteract excessive underestimation. The loss function for CQL is given by:
\begin{equation}
    \mathcal{L}_\text{CQL} = \alpha\left(\mathbb{E}_{s\sim \mathcal{D}}\left[\log\left(\sum_{a'}\exp{\left(Q_\theta(s,a')\right)}\right)\right] - \mathbb{E}_{s,a\sim\mathcal{D}}\left[Q_\theta(s,a)\right]\right) + \mathcal{L}_\text{TD}
\end{equation}


\subsection{Problem setup}
We consider a multi-task offline RL problem: given a static dataset of transitions $\mathcal{D}=\{(s_t, a_t, r_t, d_t, s_{t+1})_i\}$ collected from various environments with arbitrary behaviour polices, our goal is to learn a single policy that maximizes the expected return $\mathcal{R}_t=\sum_{k\geq t}\gamma^{k-t} r_k$ on all considered environments and can be efficiently fine-tuned to new tasks, where $\gamma$ is the discount factor.

Considering the computational budget, we use 15 Atari games for pretraining and 5 games for out-of-distribution (OOD) experiments. The whole training process took around 12 days on A100 GPUs. Our offline dataset is derived from the the DQN-Replay dataset~\citep{agarwal2020optimistic}, which consists of $84\times 84$ grayscale images as observations and a full action space with 18 discrete actions. 
\section{Jointly-Optimized World-Action Model}
In this section, we first detail the architecture of JOWA and the loss functions for joint optimization in section \ref{sec:world-action model}. Next, we introduce the provably efficient and parallelizable planning algorithm employed to compensate for the Q-value estimation error in section \ref{sec:planning}. Finally, we present the overall training pipeline in section \ref{sec:transfer}.

\subsection{World-Action Model} \label{sec:world-action model}
\subsubsection{Architecture}
\begin{figure}[t]
\centering
\includegraphics[width=0.8\textwidth]{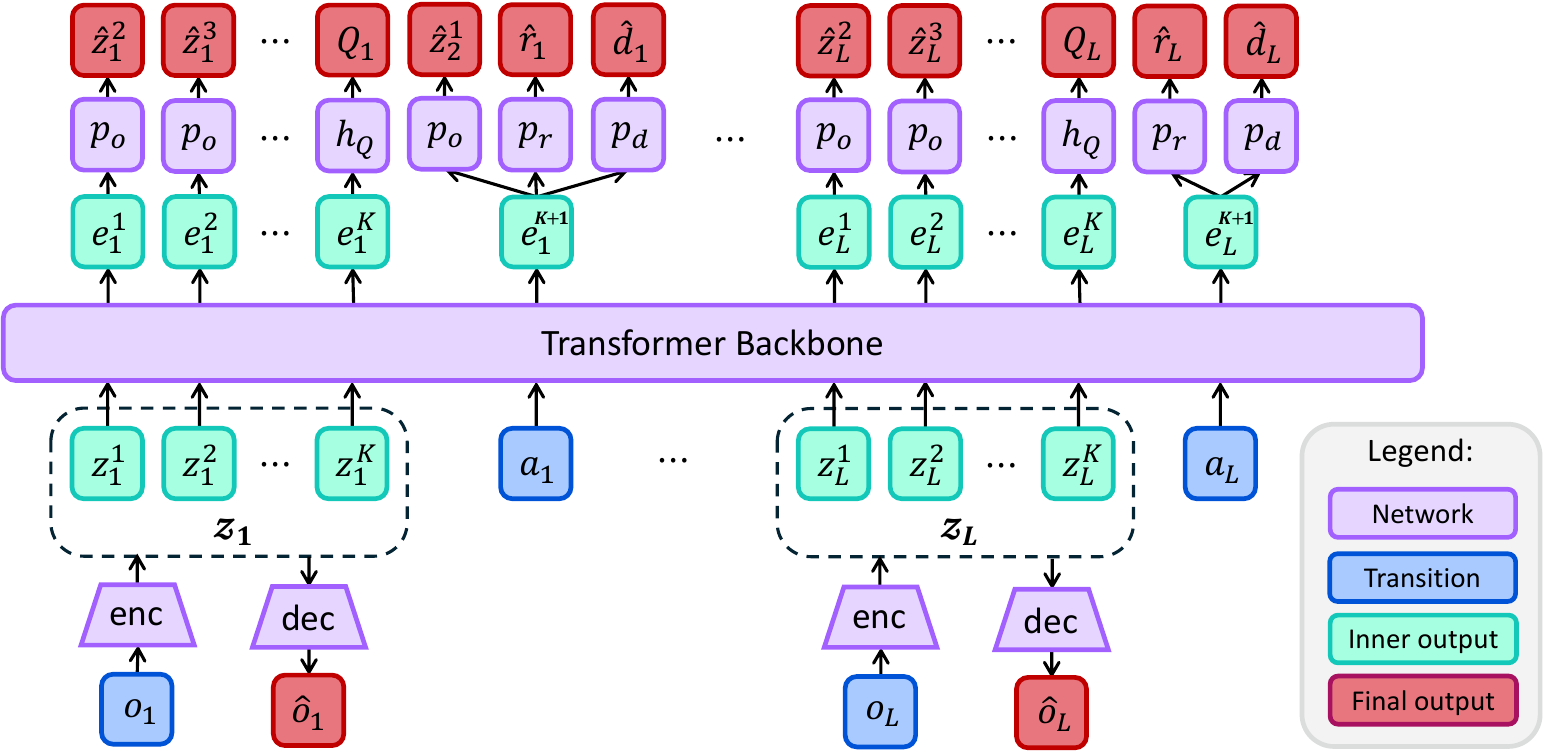}
\caption{Architecture of JOWA. We use a shared transformer backbone for both world modeling and Q-value criticism to enable joint optimization. VQ-VAE tokenizes images into visual tokens. The sum of vocabulary embeddings, position embeddings and task embeddings forms the input embeddings space for the transformer backbone.}
\label{fig:architecture}
\vspace{-5pt}
\end{figure}

Figure \ref{fig:architecture} illustrates JOWA's architecture, which uses a transformer backbone~\citep{vaswani2017attention} to simultaneously learn world dynamics and Q-values across environments. This dual capability is achieved through distinct prediction heads that process the transformer's output embedding $e$. The world dynamics are modeled via supervised learning using three heads: next observation token predictor $p_o$, reward predictor $p_r$, and termination predictor $p_d$. The Q-values head $h_Q$ learns the Q-function through TD-learning, based on implicit representations of historical trajectories. In the following sections, we refer to the \{transformer, $p_o$, $p_r$, $p_d$\} components as the "world-part" with parameters $\theta$, and the \{transformer, $h_Q$\} components as the "action-part" with parameters $\phi$.

We use VQ-VAE, a discrete autoencoder, to tokenize image observations, representing high-dimensional images as a sequence of $K$ tokens. The VQ-VAE is trained with an equally weighted combination of $L_1$ reconstruction loss, commitment loss~\citep{van2017neural}, and perceptual loss~\citep{esser2021taming}. Then the transformer operates on the interleaved observation and action tokens represented as $(z_0^1, \dots, z_0^K, a_0, \dots, z_L^1, \dots, z_L^K, a_L)$, where $L$ is the maximum timesteps of the trajectory segments. For multi-task learning, we incorporate learnable task embeddings for both observation and action tokens.

\subsubsection{Training of World-part Module}
At each timestep $t$, the world-part module models the following distributions: 
\begin{alignat}{4}
& \text{Dynamics predictor: }   &   \hat{z}_{t}^k       &   \sim p_o\big(\hat{z}_{t}^k \mid f(z_{\le t-1}, z_{t}^{<k}, a_{\le t-1}, u) \big) & \\
& \text{Reward predictor: }     &   \hat{r}_t    &   \sim p_r\big( \hat{r}_t  \mid f(z_{\le t}, a_{\le t}, u) \big) \\
& \text{Termination predictor: }  & \hat{d}_t    &   \sim p_d\big( \hat{d}_t  \mid f(z_{\le t}, a_{\le t}, u) \big) 
\end{alignat}
where $f$ represents the transformer backbone, $u$ is the task ID to index the corresponding task embedding, $k \in \{1,\cdots,K\}$, and $t \in \{1,\cdots,L\}$.

To unify the category of loss functions for balanced training~\citep{vandenhende2021multi}, we convert scalar rewards to ternary quantities $\{-1,0,1\}$ using the sign function. This allows all three predictors to be optimized as classification problems by cross-entropy loss. Given $L$-timesteps segments sampled from the offline dataset, the loss function for the world-part module is formulated as:
\begin{align}
    \mathcal{L}_{\text{world}}(\theta) = \frac{1}{L}\sum_{t=1}^L \big[ & \frac{1}{K}\sum_{k=1}^K -\ln p_o\big(z_{t}^k \mid f(z_{\le t-1}, z_{t}^{<k}, a_{\le t-1}, u) \big) \nonumber \\ 
    &- \ln p_r\big(r_t \mid f(z_{\le t}, a_{\le t}, u) \big) 
    - \ln p_d\big(d_t \mid f(z_{\le t}, a_{\le t}, u) \big) \big]
\label{eq:loss of world-part}
\end{align}

\subsubsection{Training of Action-part Module}
We use CQL for offline TD-learning during pretraining. To ensure training stability and enhance scaling performance~\citep{farebrother2024stop}, we employ distributional TD-error~\citep{bellemare2017distributional} instead of the mean-square TD-error, maintaining consistency with the $\mathcal{L}_\text{world}$ loss category.

The action-part module computes the return distribution for all actions given an observation and historical information. 
The value function $Q(o_t, a_t)$ is the mean of $\mathcal{Z}(o_t, a_t)$. Then the loss function for the action-part module is formulated as:
\begin{equation}
\resizebox{0.9\hsize}{!}{
$
    \mathcal{L}_\text{action}(\phi) = \alpha\left[\log\left(\sum_{a}\exp{\left(Q(o_t,a)\right)}\right) - Q(o_t,a_t)\right] + \displaystyle \KL \left( \mathcal{T}^* \mathcal{Z}^-(o_t, a_t) \Vert \mathcal{Z}(o_t, a_t) \right)
$}
\end{equation}
where $\mathcal{T}^*$ is the distributional Bellman optimality operator defined in Equation (\ref{eq:distribution tau}), and $\mathcal{Z}^-$ is the target distribution computed through a target Q-values head $h_{Q^-}$. We set $\alpha=0.1$ in experiments.

Therefore, the joint optimization objective for the world-action model is formulated as:
\begin{equation}
    \mathcal{L}(\theta, \phi) = \beta \mathcal{L}_\text{world}(\theta) + \mathcal{L}_\text{action}(\phi)
    \label{eq:loss of world-action}
\end{equation}
with the coefficient $\beta > 0$. We set $\beta=0.1$ in our experiments.

Both $\mathcal{L}_\text{world}$ and $\mathcal{L}_\text{action}$ back-propagate gradients to the transformer. Previous works observed that TD methods suffer from greater instability with larger model size~\citep{kumar2020implicit, sokar2023dormant}. However, through jointly optimizing the world-action model, $\mathcal{L}_\text{world}$ serves as a regularizer to stabilize TD-learning in large-scale models. Moreover, the world-part module enables planning at decision time for optimal inference and sample-efficient transfer, detailed in the following sections.

\subsection{Parallelizable planning at inference time} \label{sec:planning}

The world-part module enables planning at decision time to compensate for inaccurate Q-value estimates, allowing JOWA to consistently search out the optimal policy. We model this process as a tree search problem and present a practical, parallelizable search algorithm. 

Given an estimated optimal Q-value $\hat{Q}^*$, a learned world model with dynamic predictor $\hat{P}$ and reward predictor $\hat{r}$, our objective is to find the optimal action $a^*_0$ maximizing the ground-truth optimal Q-value $Q^*$, starting from state $s_0$. To do so, we rewrite the Bellman optimality equation as:
\begin{equation}
    Q^*(s_0, a_0) = \max_{\pi_{Q^*}} \expec_{\substack{s_1, \cdots, s_H \sim P \\ a_1, \cdots, a_{H-1} \sim \pi_{Q^*}}}\left[ \sum_{t=0}^{H-1} \gamma^t r(s_t, a_t) + \gamma^H \max_{a_H} Q^*(s_H, a_H) \right]
    \label{eq:rewrite bellman optimality equation}
\end{equation}
where $\pi_{Q^*}$ is the policy induced by the optimal Q-function. For Equation (\ref{eq:rewrite bellman optimality equation}), the optimal policy is the greedy policy based on $Q^*$. The proof is provided in the Appendix \ref{app:proof of eq11}.

To derive the search objective function, we follow three steps: (i) replace the ground-truth functions in the right side of Equation (\ref{eq:rewrite bellman optimality equation}) with estimated or learned functions. (ii) leverage the estimated optimal Q-function $\hat{Q}^*$ to reduce the policy space for search, restricting the actions to those with top-$K$ highest Q-values. Denote the constrained policy space as $\Pi_{\hat{Q}^*}$, where $\forall \pi \in \Pi_{\hat{Q}^*}, \forall s \in \mathcal{S}, \forall a \notin \text{top-}K(\hat{Q}^*(s, \cdot))$, we have $\pi(a|s)=0$. (iii) maximize over $a_0$ on both sides of Equation (\ref{eq:rewrite bellman optimality equation}) to find the optimal initial action, considering the restriction in the second step. Finally, the resulting objective function is formulated as:
\begin{equation}
    \max_{\pi \in \Pi_{\hat{Q}^*}} \expec_{\substack{s_1, \cdots, s_H \sim \hat{P} \\ a_0, \cdots, a_{H-1} \sim \pi}}\left[ \underbrace{\sum_{t=0}^{H-1} \gamma^t \hat{r}(s_t, a_t)}_{\text{income-to-date}} + \underbrace{\gamma^H \max_{a_H} \hat{Q}^*(s_H, a_H)}_{\text{income-to-go}} \right]
    \label{eq:search objective}
\end{equation}
Detailed derivation is provided in the Appendix \ref{app:derivation from 11 to 12}. Then we show the error bound of search-based optimal Q-function in Theorem \ref{app:theorem of upper bound in main text}, with the formal theorem and its proof in Appendix \ref{app:upper bound}.

\begin{informaltheorem}
    Denote the search-based optimal Q-function as $f(s,a)$. Assume the learned reward function $\hat{r}$ to be $L_r$-Lipschitz and the estimated optimal Q-function $\hat{Q}^*$ to be $L_Q$-Lipschitz. Assume the estimation errors of the learned state transition, reward, and Q-value are bounded by $\epsilon_s, \epsilon_r, \epsilon_Q$ respectively. Then we have the error between search-based optimal Q-value $f(s,a)$ and ground-truth optimal Q-value $Q^*(s,a)$ bounded as:
    \begin{equation}
        \left\Vert f(s,a) - Q^*(s,a) \right\Vert \leq \frac{1-\gamma^H}{1-\gamma}\epsilon_r + \left(\frac{\gamma-\gamma^H}{1-\gamma}\epsilon_r+\gamma^H\epsilon_Q\right)\epsilon_s + \gamma^H \epsilon_Q
    \end{equation}

    Under the following condition, the search-based optimal Q-function $f$ has an upper error bound no greater than the estimated optimal Q-function $\hat{Q}^*$:
    \begin{equation}
        \frac{1}{1-\gamma}\epsilon_r + \left( \frac{\gamma}{1-\gamma}\frac{L_r-\gamma^HL_Q}{1-\gamma^H}-\frac{L_r-L_Q}{1-\gamma}\frac{\gamma^H}{1-\gamma^H} \right)\epsilon_s \leq \epsilon_Q
        \label{eq:condition for lower bound}
    \end{equation}
    \label{app:theorem of upper bound in main text}
\end{informaltheorem}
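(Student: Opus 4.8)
The plan is to write both $Q^*(s,a)$ and the search value $f(s,a)$ as $H$-step lookahead quantities — $Q^*$ through the Bellman identity of Equation (\ref{eq:rewrite bellman optimality equation}), and $f$ through the search objective of Equation (\ref{eq:search objective}) with the initial action $a_0$ held fixed — and then bound their difference by swapping the three approximate components one at a time. Concretely, I would insert an intermediate value $V$ that uses the \emph{true} transition kernel $P$ together with the \emph{learned} reward $\hat r$ and Q-estimate $\hat Q^*$, and split
$$\left\Vert f - Q^* \right\Vert \;\le\; \left\Vert f - V \right\Vert + \left\Vert V - Q^* \right\Vert.$$
Throughout I would rely on the elementary fact that $\lvert \max_\pi A(\pi) - \max_\pi B(\pi)\rvert \le \sup_\pi \lvert A(\pi) - B(\pi)\rvert$, so that the outer maximizations over policies reduce to a per-policy comparison of the discounted rollouts.

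For $\left\Vert V - Q^* \right\Vert$ the two rollouts share the same transition law and therefore visit the \emph{same} states under any fixed policy; the only discrepancies are the per-step reward errors and the terminal Q error. Summing the reward error with discounting gives $\sum_{t=0}^{H-1}\gamma^t \epsilon_r = \frac{1-\gamma^H}{1-\gamma}\epsilon_r$, while the terminal $\max_{a_H}$ term contributes $\gamma^H \epsilon_Q$, matching the first and last summands. For $\left\Vert f - V \right\Vert$ the reward and terminal estimates are identical ($\hat r$, $\hat Q^*$), but the states now diverge because the rollout uses $\hat P$ instead of $P$; here I would invoke the $L_r$-Lipschitzness of $\hat r$ and the $L_Q$-Lipschitzness of $\hat Q^*$ to turn a state deviation of size $\epsilon_s$ into value deviations $L_r\epsilon_s$ (at each reward with $t\ge 1$, since $s_0$ is fixed) and $L_Q\epsilon_s$ (at the horizon). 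Discounted summation then yields $\big(\frac{\gamma-\gamma^H}{1-\gamma}L_r + \gamma^H L_Q\big)\epsilon_s$. I note that this is exactly the factor consistent with the subsequent condition (\ref{eq:condition for lower bound}), which carries $L_r,L_Q$; reversing that condition recovers the middle term with $L_r,L_Q$ in place of the $\epsilon_r,\epsilon_Q$ of the informal display, so my derivation naturally produces the correct coefficient.

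The hard part will be the transition term $\left\Vert f - V \right\Vert$, specifically controlling how a single-step model error $\epsilon_s$ propagates across an $H$-step rollout. A naive unrolling makes the state deviation compound geometrically in the transition-Lipschitz constant, so to land on a clean linear-in-$\epsilon_s$ bound I would either take $\epsilon_s$ to bound the \emph{cumulative} state deviation along the trajectory (so each $\hat s_t$ stays within $\epsilon_s$ of $s_t$), or restrict to horizons short enough that higher-order compounding is negligible. A secondary subtlety is the policy-space restriction: $Q^*$ maximizes over all policies whereas $f$ maximizes only over $\Pi_{\hat Q^*}$ (the top-$K$ actions under $\hat Q^*$). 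I would address this by arguing that the restriction never enlarges the attainable value and, under the near-optimality of $\hat Q^*$, still retains the greedy action that realizes $Q^*$, so the per-policy comparison above continues to dominate the difference.

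Finally, the second assertion is purely algebraic. Since $\hat Q^*$ carries error bound $\epsilon_Q$ by assumption, I would require the derived bound on $\left\Vert f - Q^* \right\Vert$ to be at most $\epsilon_Q$, move the $\gamma^H \epsilon_Q$ term to the right to obtain $(1-\gamma^H)\epsilon_Q$, and divide through by $1-\gamma^H$. Collecting the resulting $\epsilon_r$ and $\epsilon_s$ coefficients then reproduces Equation (\ref{eq:condition for lower bound}) exactly, certifying that $f$ is no worse than $\hat Q^*$ whenever that inequality holds.
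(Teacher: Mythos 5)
Your proposal is correct and takes essentially the same route as the paper's own proof: the paper likewise unrolls $f$ and $Q^*$ as $H$-step rollouts, applies the triangle inequality together with the Lipschitz assumptions step by step ($\epsilon_r$ at $t=0$, $L_r\epsilon_s+\epsilon_r$ for $1\le t\le H-1$, and $L_Q\epsilon_s+\epsilon_Q$ at the horizon) and sums the geometric series, so your intermediate value $V$ merely regroups the identical per-term decomposition, and your reading of $\epsilon_s$ as bounding the per-step state deviation along the whole rollout is exactly the paper's stated assumption. You are also right that the derivation yields the middle coefficient $\bigl(\frac{\gamma-\gamma^H}{1-\gamma}L_r+\gamma^H L_Q\bigr)\epsilon_s$: the $\epsilon_r,\epsilon_Q$ appearing in that term of the theorem display are a typo in the paper, whose own proof and whose condition for $f$ to improve on $\hat{Q}^*$ both carry $L_r,L_Q$ exactly as you observed.
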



To optimize objective function (\ref{eq:search objective}), we interact with the imagined MDP induced by the learned world model using the constrained policy $\pi \in \Pi_{\hat{Q}^*}$ for $H$ steps, starting from $s_0$. We then compute the total income (income-to-date plus income-to-go) for all leaf nodes to identify the optimal path. The first edge of this path is the optimal initial action. 

We implement this search using beam search, an efficient decoding algorithm common in NLP. At each timestep, we retain only $K$ states with the top-$K$ total income values. The horizon $H$ and beam width $K$ are hyper-parameters, with $K=1$ or $H=0$ degenerating to a $\hat{Q}^*$-based greedy policy. The algorithm calls the world model $K^2(H-1)+K$ times but only takes $H$ times as long as the forward propagation of the world model due to parallelizability across $K$ beams.


\subsection{Training pipeline} \label{sec:transfer}
Our multi-task offline RL pretraining consists of two stages:
\begin{itemize} [leftmargin=8mm]
\setlength\itemsep{0.1em}
\item \textbf{Stage 1:} Sample trajectory segments from datasets. Train the VQ-VAE tokenizer using image observations. Train the world-part module using segments with loss (\ref{eq:loss of world-part}) for $M_1$ steps.
\item \textbf{Stage 2:} Freeze the VQ-VAE tokenizer. Sample segments and jointly optimize the world-action model with loss function (\ref{eq:loss of world-action}) for $M_2$ steps. 
\end{itemize}

We employ this two-stage training approach to stabilize and accelerate the overall training process. In our pretraining experiments, we set $M_1=250$k and $M_2=1.5$M, totaling $1.75$M gradient steps.
\section{Experiments}
We design our experiments to answer the following questions: \textbf{(1)} How does JOWA perform on multi-games in low-data regime? \textbf{(2)} Can JOWA effectively leverage higher model capacity? \textbf{(3)} Does the pre-trained JOWA sample-efficiently transfer to new games?

\subsection{Experimental Setup} \label{sec:exp_setup}

\paragraph{Dataset.}
We use the Atari dataset from \cite{agarwal2020optimistic},  which contains 50M transitions from each of 5 separate training runs. Due to the prohibitive long training time on full games, we select a subset of 20 games, maintaining the difficulty distribution of full games defined by \cite{gulcehre2020rl}. These 20 games are introduced in Appendix \ref{app:intro of games}. 15 of those games are used for training, and 5 games are held out for OOD generalization experiments. Following \cite{lee2022multi}, we use data from 2 out of 5 training runs. To investigate performance in low-data regime, we uniformly draw 10\% of transitions at random, as per \cite{agarwal2020optimistic}, resulting in 10M transitions per game.


\paragraph{Training and Fine-tuning.}
We implement our world-action model based on GPT-2 \citep{brown2020language}. We train three JOWA variants: JOWA-150M (150M parameters), JOWA-70M, and JOWA-40M. We set the number of visual tokens $K$ to 36, resulting in a total of approximately 6B tokens in our dataset. The sequence length $L$ is set to 8. We pretrain all JOWA models on A100 GPUs for 1.75M steps.
For fine-tuning, we train models for 50k gradient steps with 5k transitions.

\paragraph{Evaluation and Metrics.}
During evaluation, we enable planning for JOWA, setting the planning horizon $H$ to 2 for all games and adjust the beam width $K$ based on the valid action space size of each game. See Appendix \ref{app:eval protocol} for the detailed evaluation protocol. We measure performance using human normalized scores (HNS) \citep{mnih2015human}, i.e. $(\textrm{score} - \textrm{score}_\textrm{random})/(\textrm{score}_\textrm{human} - \textrm{score}_\textrm{random})$.
To create an aggregate comparison metric across all games, we use inter-quartile mean (IQM) of human-normalized scores, following evaluation best practices proposed in \citep{agarwal2021deep}.

More details on hyperparameters, network architecture, algorithm implementation, and protocols for fine-tuning, and evaluation are provided in the Appendix \ref{app:experiment details}.

\subsection{Baseline Methods}
We compare JOWA with the following baselines: (i) \textbf{Multi-Task BC (MTBC).} Our method can naturally be reduced to a transformer-based multi-task behavioral cloning agent.
We train MTBC models at 3 scales: 34M, 65M, 120M parameters. (ii) \textbf{Multi-Game DT (MGDT).} \cite{lee2022multi} used $\{o_{t+i}, \hat{R}_{t+i}, a_{t+i}, r_{t+i}\}_{i=0}^3$ as input sequences, where $\hat{R}$ is the target return, and expert-conditioned return distribution to induce expert-level actions during evaluation. We train MGDT models at 2 scales: 40M and 200M parameters. (iii) \textbf{Elastic DT (EDT).} Based on MGDT, \cite{wu2024elastic} removed the reward in the input sequences and dynamically selected history length to address challenges of trajectory stitching. We train EDT based on the architecture configuration of MGDT-200M. (iv) \textbf{Scaled-QL (SQL).} \cite{kumar2023offline} scaled offline Q-learning through pre-trained and shared representation network and separate Q-values head for each game. We reproduce the largest Scaled-QL with 80M parameters using ResNet-101 network. (v) \textbf{FICC.} \cite{ye2022become} pretrained the dynamic model with videos and then online fine-tuned the agent on each game. To obtain an offline multi-task policy, we pretrain FICC for 0.5M steps and then primarily fine-tune the $Q$-function on 15 games all at once, the same as JOWA's second pretraining stage, for 1.25M steps. We replace all residual blocks in FICC's official code with ResNet-50-style blocks, resulting in FICC-85M. For fair comparison, all methods use the same batch size of 512 and 1.75M gradient steps. The implementation details of all baselines are provided in the Appendix \ref{app:implement details}.

\subsection{How does JOWA perform on multi-games in low-data regime?}
\begin{table}[t]
\caption{Returns on the 15 Atari games for best performing sizes of multi-game models trained with the 10\% subsampled dataset. Bold numbers indicate the top methods.}
\vspace{-10pt}
\begin{center}
\begin{small}
\centering
\scalebox{0.78}{
\centering
\begin{tabular}{lrr|rrrrrrrr}
\toprule
\multirow{2}{*}{Game} & \multirow{2}{*}{Random} & \multirow{2}{*}{Human} & MTBC & MGDT & EDT  & SQL & FICC & \multicolumn{3}{c}{JOWA} \\
\cline{9-11}
 & & & 120M & 200M & 200M & 80M & 85M & \multicolumn{1}{c}{40M} & \multicolumn{1}{c}{70M} & \multicolumn{1}{c}{150M} \\
\midrule
Assault & 222.4 & 742.0 & 1203.5 & 1741.5 & 1915.5 & 2292 & 925.9 & 1578.3 & 1402.8 & \textbf{2302}   \\
Atlantis & 12850 & 29028.1 & 37812.5 & 2565750 & 2108166.7& 49100 & 86250 & 41662.5 & 228825 & \textbf{2690387.5} \\
BeamRider & 363.9 & 16926.5 & 786 & 6011.3 & 4149.8 & \textbf{7023.5} & 6822 & 880.3 & 854.4 & 3498 \\
Berzerk & 123.7 & 2630.4 & 655 & 444.5 & 350 & 312.5 & 400 & 395 & 441.9 & \textbf{739} \\
Carnival & 0 & 3800 & \textbf{5560} & 2610 & 3678.8 & 1940 & 2820 & 4685 & 3812.1 & 5316 \\
Centipede & 2090.9 & 12017 & 4046.6 & 4604 & 3389.9 & 3650.3 & 3742.2 & \textbf{5669.3} & 4485.0 & 4677 \\
ChopperCommand & 811.0 & 7387.8 & 256.3 & 3300.8 & 3443.8 & 853.8 & 2835.6 & 3118.8 & 2568.8 & \textbf{3812.5} \\
DemonAttack & 152.1 & 1971 & 2611.3 & 6549.4 & 3455.7 & \textbf{7936.5} & 5806.4 & 1233.8 & 1409.4 & 3547.8 \\
NameThisGame & 2292.3 & 8049 & 6045 & 6610.5 & 7060 & 6461.7 & 6236 & 7335.6 & 7507.1 & \textbf{11421} \\
Phoenix & 761.4 & 7242.6 & 1446.9 & 5120.5 & 5320 & 4961.7 & 3814.5 & 1608.8 & 5070 & \textbf{5348} \\
Seaquest & 68.4 & 42054.7 & 120 & 2720 & \textbf{3160.4} & 650 & 1760 & 1033.1 & 1040 & 2725 \\
SpaceInvaders & 148 & 1668.7 & 605 & 742.5 & 513.8 & 714.4 & 641.2 & 694.7 & \textbf{805.3} & 744.7 \\
StarGunner & 664.0 & 10250 & 1493.8 & 8625.5 & 9550 & 4728.6 & 4936.4 & 5737.5 & \textbf{21700} & 18150 \\
TimePilot & 3568 & 5229.2 & 762.5 & 3866.7 & 2812.5 & 4072.7 & \textbf{4166.7} & 3268.8 & 1100.3 & 3669 \\
Zaxxon & 32.5 & 9173.3 & 0 & 462.5 & 325.5 & 0 & 312.5 & 0 & 200 & \textbf{2163} \\
\midrule
\#Superhuman & 0 & N/A & 4 & 3 & 4 & 3 & 3 & 3 & 4 & \textbf{6} \\
Median HNS & 0.000 & 1.000 & 0.197 & 0.391 & 0.400 & 0.387 & 0.390 & 0.360 & 0.432 & \textbf{0.456} \\
IQM HNS & 0.000 & 1.000 & 0.329 & 0.498 & 0.502 & 0.420 & 0.433 & 0.371 & 0.485 & \textbf{0.789} \\
\bottomrule
\end{tabular}
 }
\end{small}
\end{center}
\vspace{-10pt}
\label{tab:main exp}
\end{table}

We summarize our main results in Table \ref{tab:main exp}. This table shows the performance of JOWA alongside all best performing sizes of baselines trained with 10\% subsampled dataset. MTBC, MGDT, and EDT represent (conditional) behavior cloning methods, while Scaled-QL, FICC, and JOWA represent Q-learning methods. Despite the constraints on the amount of data, JOWA achieves superior performance with comparable or fewer parameters than baselines. Specifically, JOWA-150M attains an IQM human-normalized score of 78.9\% in low-data regime, surpassing MGDT-200M (49.8\%) and EDT-200M (50.2\%), despite these two models having more parameters and utilizing data augmentation during pretraining. JOWA-70M achieves an IQM human-normalized score of 47.6\%, outperforming Scaled-QL-80M (42.0\%) and FICC-85M (43.3\%). JOWA maintains its performance advantage when comparing median human-normalized scores. These results demonstrate JOWA's sample efficiency in learning from heterogeneous offline data.

Next, we present experiments highlighting JOWA's scaling and generalization capabilities. These two key properties derived from multi-game pretraining align with recent advancements in SL.

\vspace{-5pt}
\subsection{How does JOWA scales with model size?}
\begin{wrapfigure}{r}{0.4\linewidth}
    \centering
    \includegraphics[width=\linewidth]{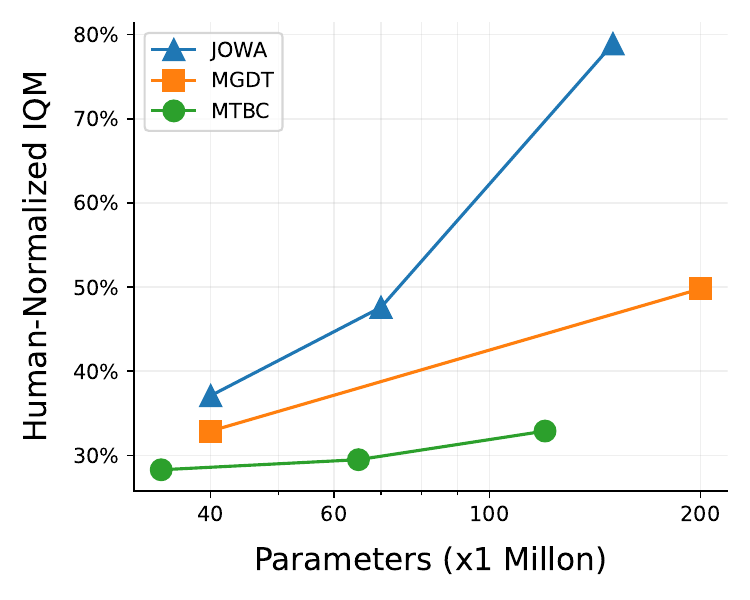}
    \vspace{-20pt}
    \caption{Scaling trends for different algorithms on the training set games.}
    \label{fig:scales}
    \vspace{-5pt}
\end{wrapfigure}
Scaling law depicts the positive correlation between model capacity and performance. Following \cite{lee2022multi, kumar2023offline}, we investigate JOWA's ability to leverage higher capacity architectures. For comparison of scaling trends, we establish 2 baselines: (i) MTBC scaled to 34M, 65M and 120M parameters. (ii) MGDT with 40M and 200M parameters. We scale JOWA by increasing the size of the transformer backbone and Q-values heads, resulting in 3 variants with 40M, 70M, and 150M parameters. Figure \ref{fig:scales} shows the scaling trends for the 3 algorithms, which demonstrates that JOWA's performance reliably increases as the model size grows. Although previous works observed that TD-learning suffer from greater instability with larger model size~\citep{lee2022multi, sokar2023dormant}, JOWA scales TD-learning through a scalable transformer architecture and an auxiliary regularization loss $\mathcal{L}_\text{world}$ to stabilize the TD-learning in large models. Notably, JOWA exhibits the steepest scaling curve among all algorithms, highlighting the great scalability potential of offline model-based RL.

\begin{table}[ht]
    \centering
    \small
    \vspace{-5pt}
    \caption{Fine-tuning performance on unseen games using 5k transitions, measured
 in terms of DQN-normalized score, following \cite{lee2022multi}. See Table \ref{tab:raw score of generalizes} in Appendix \ref{app:raw score} for raw scores.}
    \vspace{-5pt}
    \begin{tabular}{l|rrrrrrrrc}
        \toprule
        \multirow{2}{*}{DNS} & MTBC & MGDT & EDT & SQL & FICC & \multicolumn{3}{c}{JOWA} & JOWA-150M \\ \cline{7-9}
         & 120M & 200M & 200M & 80M & 85M & 40M & 70M & 150M & \multicolumn{1}{c}{(scratch)} \\
        \midrule
        Mean & 0.164 & 0.422 & 0.430 & 0.360 & 0.543 & 0.504 & 0.576 & \textbf{0.647} & 0.196 \\
        Median & 0.215 & 0.354 & 0.325 & 0.284 & 0.565 & 0.512 & \textbf{0.715} & 0.615 & 0.173 \\
        IQM & 0.205 & 0.377 & 0.380 & 0.355 & 0.575 & 0.498 & 0.603 & \textbf{0.647} & 0.181 \\
        \bottomrule
    \end{tabular}
    \label{tab:generalizes}
    \vspace{-10pt}
\end{table}

\vspace{-5pt}
\subsection{Can JOWA sample-efficiently transfer to new games?}
Rapid adaptation to downstream tasks is a natural and well-motivated benefit of pretraining. 
To investigate this question, we fine-tune pretrained agents on 5 held-out games using uniformly subsampled 5k expert-level transitions (from last 20\% of DQN-Replay~\citep{agarwal2020optimistic}) per game as the benchmark. These tiny amounts of transitions corresponding to approximately 4 trajectories from expert-level DQN-Replay per fine-tuned game on average, which is similar to the settings of few-shot learning and is extremely challenging. For comparison, we fine-tune the largest agents of baselines alongside a no-pretrained baseline, JOWA-150M (scratch) trained on each held-out game from scratch. All pretrained agents and JOWA-150M (scratch) are fine-tuned for 50k and 500k gradient steps, respectively. Detailed fine-tuning protocol is shown in Appendix \ref{app:fine-tune protocol}. We report results in terms of DQN-normalized score (DNS), following \cite{lee2022multi}, in Table \ref{tab:generalizes}.

The results show that the fine-tuned JOWA-150M attains 64.7\% IQM DNS across 5 held-out games, outperforming baselines by 69.9\% on average. Moreover, comparing the performance of 3 JOWA variants, the scaling trend on IQM DNS still holds for novel games after fine-tuning. These results underscore JOWA's capacity for rapid and sample-efficient transfer to novel games, highlighting the efficacy of its learned general-purpose representation and decision-making capabilities.

\vspace{-5pt}
\subsection{Ablation Study}
In this section, we conduct a series of ablation studies to evaluate the impact of key design choices in JOWA, including planning, different training losses, and usage of synthetic data. More experiments can be found in Appendix \ref{app:raw score}. These experiments aim to provide empirical evidence to support our designs and offer valuable insights for future research in this domain.

\begin{minipage}{0.53\textwidth}
    \captionof{table}{Performance on the 15 Atari games for 3 variants of JOWA evaluated with or without planning reported in terms of the IQM HNS.}
    \vspace{-10pt}
    \label{tab:effects of planning}
    \begin{center}
    \begin{small}
    \scalebox{0.80}{
        \begin{tabular}{lrrr}
        \toprule
         & \textbf{JOWA-150M} & \textbf{JOWA-70M} & \textbf{JOWA-40M} \\
        \midrule
        \textbf{without planning} & 50.8\% & 45.1\% & 27.1\% \\
        \textbf{with planning} & 78.9\% & 47.6\% & 37.1\% \\
        \textbf{improvement ($\uparrow$)} & +28.1\% & +2.5\% & +10.0\% \\
        \bottomrule
        \end{tabular}
     }
    \end{small}
    \end{center}
\end{minipage}
\hfill
\begin{minipage}{0.42\textwidth}
    \captionof{table}{Results on 7 games for JOWA-150M evaluated with different planning algorithms. See Table \ref{tab:raw score of MCTS vs. ours} for raw scores.}
    \vspace{-10pt}
    \label{tab:MCTS vs. ours planning}
    \begin{center}
    \begin{small}
    \scalebox{0.85}{
        \begin{tabular}{lrrr}
        \toprule
         & \textbf{w/o planning} & \textbf{MCTS} & \textbf{Ours} \\
        \midrule
        FPS & \textbf{10.8} & 0.12 & 1.26 \\
        Mean HNS & -2\% & 22.1\% & \textbf{37.8\%} \\
        IQM HNS & 3\% & 13.4\% & \textbf{23.7\%} \\
        \bottomrule
        \end{tabular}
     }
    \end{small}
    \end{center}
\end{minipage}

\vspace{-5pt}
\paragraph{Effects of planning at decision time.}
We evaluate our 3 variants models with or without planning and report the IQM HNS across 15 pretrained games in Table \ref{tab:effects of planning}. Observe that the addition of planning improves 13.5\% performance on average and 28.1\% on maximum for JOWA-150M. Even without planning, JOWA still exhibits scaling trends with increasing model size, but at a slower rate.
 
 To further demonstrate the efficiency of our planning algorithm, we compare it with MCTS. Because Muzero-style~\citep{schrittwieser2020mastering} MCTS requires access to the $V$-function and policy networks $\pi$ while JOWA only estimates the optimal $Q$-function, we use $\max_a Q(s,a)$ as the $V$-value $V(s)$ and use an energy-based policy to compute the action probability, \textit{i.e.}, $\pi(\cdot|s) = \operatorname{softmax}(Q(s, \cdot)/t)$, where $t$ is the temperature. We conduct a grid search on the implementation choices and hyperparameters for MCTS and show the details in Appendix \ref{app:raw score}. We report the frames per second (FPS) and HNS on 7 games, where bare JOWA-150M performs poorly, in Table \ref{tab:MCTS vs. ours planning}. Observe that ours not only is 10$\times$ faster than MCTS but also exceeds MCTS by 71.0\% on Mean HNS. Moreover, we empirically observe that MCTS is highly sensitive to the temperature and max depth of the tree while the long execution time makes it inconvenient to tune hyperparameters.

The subsequent ablation experiments require training from scratch in multi-game regime. For time-saving, we consider a subset of 6 games in the following experiments: \texttt{Assault}, \texttt{Carnival}, \texttt{Centipede}, \texttt{NameThisGame}, \texttt{Phoenix}, \texttt{SpaceInvaders}. We train all models with 10\% randomly subsampled data per game for 1M gradient steps and fix the parameter size to 40M. 

\begin{table}[ht]
\vspace{-5pt}
\caption{The mean, median, and IQM human-normalized score on the 6 Atari games for various training choices. See Tabel \ref{tab:raw scores of training choices} in Appendix \ref{app:raw score} for raw scores.}
\vspace{-10pt}
\label{tab:training choices}
\begin{center}
\begin{small}
\centering
\begin{tabular}{l|r|rrrr|r}
\toprule
\multirow{2}{*}{Game} & \multirow{2}{*}{Origin} & \multicolumn{4}{c|}{Different training losses} & \multicolumn{1}{l}{Synthetic data}  \\
\cline{3-6}
 & & No CQL & No $\mathcal{L}_\text{world}$ & $\operatorname{sg}(\mathcal{L}_\text{action})$ & MSE & \multicolumn{1}{l}{in pretraining} \\
\midrule
Mean HNS & \textbf{1.183} & 0.613 & 0.917 & 0.293 & 0.189 & 0.448 \\ 
Median HNS & \textbf{1.078} & 0.696 & 0.489 & 0.304 & 0.118 & 0.452 \\
IQM HNS & \textbf{1.123} & 0.637 & 0.659 & 0.307 & 0.126 & 0.464 \\ 
\bottomrule
\end{tabular}
\end{small}
\end{center}
\vspace{-10pt}
\end{table}

\vspace{-5pt}
\paragraph{Different training losses.}
We train models with each of the following 5 loss functions to investigate the impact of each loss term: (i) original loss defined in Equation (\ref{eq:loss of world-action}), (ii) No CQL regularization in $\mathcal{L}_\text{action}$, which means no conservative constraints, (iii) No $\mathcal{L}_\text{world}$, which means training a model-free transformer-based critic network using $\mathcal{L}_\text{action}$, (iv) gradients of $\mathcal{L}_\text{action}$ not back-propagated to transformer backbone, denoted as $\operatorname{sg}(\mathcal{L}_\text{action})$ for short, which means optimizing the world model and critic network separately, (v) mean square error (MSE) instead of distributional TD-loss. The results of different training losses are shown in the third main column of Table \ref{tab:training choices}. 

Observe that the $\operatorname{sg}(\mathcal{L}_\text{action})$ and MSE loss fail to train qualified multi-game agents. We empirically observe that agents trained with MSE TD-loss always over-optimize the CQL regularization term regardless of the value of coefficient $\alpha$ (tested with $\alpha \in \{0.01, 0.05, 0.1\}$), resulting in extremely high Q-values for in-domain state-actions and low Q-values for OOD actions. The fail of $\operatorname{sg}(\mathcal{L}_\text{action})$ underscores the importance of joint-optimization in the world-action model. Comparing the original loss with the no $\mathcal{L}_\text{world}$ configuration demonstrates that scaling model-based multi-game RL is more efficient than model-free RL. Additionally, the CQL conservatism regularizer is also necessary in multi-game pretraining. Overall, the original loss outperforms all variant losses by a large margin, indicating the effectiveness of every loss terms.



\vspace{-5pt}
\paragraph{Effects of synthetic data in pretraining.}
By default, we do not employ planning or synthetic data during pretraining due to the significant increase in training time (approximately 10$\times$ slower, extending training to months). For this ablation study, we enable synthetic data in pretraining for JOWA-40M and train on 6 games for 1M steps, which takes 31 days. Specifically, we sample batch of 4-step segments and interact in the imagined MDP with $\epsilon$-greedy policy to synthesis the last 4 steps. Then we optimize JOWA with half real data and half synthetic data using COMBO loss as $\mathcal{L}_\text{action}$. The results are shown in the last main column of Table \ref{tab:training choices}.

Suprisingly, we observe negative gains from the usage of synthetic data. We hypothesize twofold reasons: (i) accumulation of inference errors over steps causes later synthetic steps to deviate significantly from the ground-truth distribution, (ii) COMBO's over-penalization of Q-values for unseen state-action pairs results in overly conservative agents. Due to the unbearable training time, further investigation into more effective methods of utilizing synthetic data is left for future work.

\vspace{-5pt}
\section{Conclusion}
\vspace{-5pt}
In this work, we introduce JOWA: \textbf{J}ointly-\textbf{O}ptimized \textbf{W}orld-\textbf{A}ction model, a single offline model-based RL agent capable of playing multiple Atari games. JOWA uses a shared transformer backbone for both the world modeling and the Q-value criticism, enabling joint optimization. We propose a provably efficient and parallelizable planning algorithm to consistently identify the optimal policy during inference. As we hoped, by increasing model parameters, JOWA unlocks scaling trends in performance and exceed prior large-scale offline RL methods in multi-games regime. Furthermore, by training a large-capacity model on a diverse set of games, we show that JOWA can sample-efficiently adapt to novel games, leveraging its generalizable world model for planning. Our ablation studies validate the efficacy of joint optimization and our planning method, while also demonstrating that scaling offline model-based RL is more efficient than offline model-free RL. To facilitate future research, we will release all training and evaluation codes, along with pretrained model checkpoints.

\vspace{-5pt}
\paragraph{Limitations.}
Due to the computation resources required, we could not experiment on full Atari 2600 games with complete datasets~\citep{agarwal2020optimistic}, as this would take 2 months for training. Additionally, while using synthetic data to train agents is common in single-task online model-based RL, we observe it yields negative gains in multi-game RL pretraining. Therefore, how to effectively use synthetic data for multi-task pretraining is an interesting direction for future work. Lastly, although we observe the scaling trend on IQM aggregated scores, the raw scores in some games are against this trend, \ie increasing model parameters leads to decreasing performance in specific games. We hypothesis that this inconsistence is an inherent problem of offline RL. Unfortunately, to our knowledge, neither increasing model size nor extending training iterations can solve this issue.

\vspace{-5pt}
\section*{Acknowledgments}
\vspace{-5pt}
We are grateful to the anonymous reviewers and editors for their insightful suggestions. This work was partially supported by the National Science and Technology Major Project (2022ZD0117102), the National Natural Science Foundation of China under Grants 62271485 and 62303462, Beijing Natural Science Foundation under grant L241016 and L233005, Chongqing Transportation Technology Project (CQJT-CZKJ2024-04), Sichuan Key-Area Research and Development Program 2024YFHZ0011, the Provincial Key Research and Development Program of Zhejiang (project number: 2022C01129), Ningbo International Science and Technology Cooperation Project (2023H020).

\bibliography{iclr2025_conference}
\bibliographystyle{iclr2025_conference}

\appendix

\newpage
\section{Proof of Equation (\ref{eq:rewrite bellman optimality equation})}
\label{app:proof of eq11}
Given the initial state-action pair $(s_0,a_0)$, the bellman expectation equation~\citep{sutton2018reinforcement} is written as:
\begin{align}
    Q^\pi(s_0,a_0) &= \expec_{s_1 \sim P(\cdot|s_0,a_0)} \left[ r(s_0,a_0) + \gamma V^\pi(s_1) \right] \nonumber \\
    &= \expec_{\substack{s_1,\cdots,s_H \sim P \\ a_1,\cdots,a_{H-1} \sim \pi}} \left[ \sum_{t=0}^{H-1} \gamma^t r(s_t, a_t) + \gamma^H V^\pi(s_H) \right]
\end{align}

According to the definition of optimal Q-value: $Q^*(s_0,a_0)=\max_\pi Q^\pi(s_0, a_0)$, we have:
\begin{align}
    Q^*(s_0,a_0) &= \max_\pi \expec_{\substack{s_1,\cdots,s_H \sim P \\ a_1,\cdots,a_{H-1} \sim \pi}} \left[ \sum_{t=0}^{H-1} \gamma^t r(s_t, a_t) + \gamma^H V^\pi(s_H) \right] \nonumber \\
    &= \max_\pi \expec_{\substack{s_1,\cdots,s_H \sim P \\ a_1,\cdots,a_{H-1} \sim \pi}} \left[ \sum_{t=0}^{H-1} \gamma^t r(s_t, a_t) + \gamma^H \max_\pi V^\pi(s_H) \right] \nonumber \\
    &= \max_\pi \expec_{\substack{s_1,\cdots,s_H \sim P \\ a_1,\cdots,a_{H-1} \sim \pi}} \left[ \sum_{t=0}^{H-1} \gamma^t r(s_t, a_t) + \gamma^H V^*(s_H) \right] \nonumber \\
    &= \max_\pi \expec_{\substack{s_1,\cdots,s_H \sim P \\ a_1,\cdots,a_{H-1} \sim \pi}} \left[ \sum_{t=0}^{H-1} \gamma^t r(s_t, a_t) + \gamma^H \max_{a_H} Q^*(s_H, a_H) \right]
    \label{eq:n-step Q*}
\end{align}

For Equation (\ref{eq:n-step Q*}), we derive the optimal policy $\pi$ is the greedy policy selecting actions with the greatest Q*-value as follows:
\begin{align}
    Q^*(s_0,a_0) &= \max_\pi \expec_{\substack{s_1,\cdots,s_H \sim P \\ a_1,\cdots,a_{H-1} \sim \pi}} \left[ \sum_{t=0}^{H-1} \gamma^t r(s_t, a_t) + \gamma^H \max_{a_H} Q^*(s_H, a_H) \right] \nonumber \\
    &= \max_\pi \expec_{\substack{s_1,\cdots,s_{H-1} \sim P \\ a_1,\cdots,a_{H-2} \sim \pi}} \left[ \sum_{t=0}^{H-2} \gamma^t r(s_t, a_t) + \gamma^{H-1} \max_{a_{H-1}} \expec_{s_H\sim P}\left[ r(s_{H-1}, a_{H-1}) + \gamma \max_{a_H} Q^*(s_H, a_H) \right] \right] \nonumber \\
    &= \max_\pi \expec_{\substack{s_1,\cdots,s_{H-1} \sim P \\ a_1,\cdots,a_{H-2} \sim \pi}} \left[ \sum_{t=0}^{H-2} \gamma^t r(s_t, a_t) + \gamma^{H-1} \max_{a_{H-1}} Q^*(s_{H-1},a_{H-1}) \right] 
    \label{eq:optimal policy under optimal Q}
\end{align}

Therefore, we have $a_{H-1}=\arg\max_{a}Q^*(s_{H-1},a)$. Similarly, continuing to use dynamic programming on Equation (\ref{eq:optimal policy under optimal Q}), we finally get: $a_{i}=\arg\max_{a}Q^*(s_{i},a), (i=1,2,\cdots,H-1)$. Thus we claim that the optimal policy is induced by the optimal value and use the symbol $\pi_{Q^*}$ instead of $\pi$ in Equation (\ref{eq:n-step Q*}) to obtain Equation (\ref{eq:rewrite bellman optimality equation}).

\section{Derivation from Equation (\ref{eq:rewrite bellman optimality equation}) to Equation (\ref{eq:search objective})}
\label{app:derivation from 11 to 12}
Given the learned dynamic preditor $\hat{P}$, reward predictor $\hat{r}$, and estimated optimal Q-value $\hat{Q}^*$, we first substitute these three functions for the ground-truth functions in the right side of Equation (\ref{eq:rewrite bellman optimality equation}):
\begin{equation}
    \max_{\pi_{\hat{Q}^*}} \expec_{\substack{s_1,\cdots,s_H \sim \hat{P} \\ a_1,\cdots,a_{H-1} \sim \pi_{\hat{Q}^*}}} \left[ \sum_{t=0}^{H-1} \gamma^t \hat{r}(s_t, a_t) + \gamma^H \max_{a_H} \hat{Q}^*(s_H, a_H) \right]
    \label{app:step1 formula}
\end{equation}

However, due to the estimation error between learned functions and ground-truth functions, $\hat{Q}^*(s_{t}, a_{t})$ is typically not equal to $\expec_{s_{t+1}\sim\hat{P}}\left[\hat{r}(s_t, a_t)+\gamma\max_{a}\hat{Q}^*(s_{t+1}, a)\right]$. Therefore, instead of using dynamic programming to derive that the optimal policy is the greedy policy as in Proof \ref{app:proof of eq11}, we have to use search to solve formula (\ref{app:step1 formula}). 

However, we can use the conclusion derived with ground-truth functions to make assumptions to reduce the policy space for search. We assume that the optimal policy for formula (\ref{app:step1 formula}) maps states to actions with top-K highest Q-values. Denote the constrained policy space as $\Pi_{\hat{Q}^*}$, where $\forall \pi \in \Pi_{\hat{Q}^*}, \forall s \in \mathcal{S}, \forall a \notin \text{top-}K(\hat{Q}^*(s, \cdot))$, we have $\pi(a|s)=0$. we make the following assumption: the optimal policy of formula (\ref{app:step1 formula}) is in the constrained policy space, \ie $\pi^*_{\hat{Q}^*} \in \Pi_{\hat{Q}^*}$. Under this assumption, formula (\ref{app:step1 formula}) is equivalent to:
\begin{equation}
    \max_{\pi\in\Pi_{\hat{Q}^*}} \expec_{\substack{s_1,\cdots,s_H \sim \hat{P} \\ a_1,\cdots,a_{H-1} \sim \pi}} \left[ \sum_{t=0}^{H-1} \gamma^t \hat{r}(s_t, a_t) + \gamma^H \max_{a_H} \hat{Q}^*(s_H, a_H) \right]
    \label{app:step2 formula}
\end{equation}

Then we maximize formula (\ref{app:step2 formula}) over $a_0$ to find the optimal initial action. Considering the above assumption, $a_0\in\text{top-}K(\hat{Q}^*(s_0, \cdot))$. Thus we have the objective for search as:
\begin{equation}
    \max_{\pi\in\Pi_{\hat{Q}^*}} \expec_{\substack{s_1,\cdots,s_H \sim \hat{P} \\ a_0,\cdots,a_{H-1} \sim \pi}} \left[ \sum_{t=0}^{H-1} \gamma^t \hat{r}(s_t, a_t) + \gamma^H \max_{a_H} \hat{Q}^*(s_H, a_H) \right]
\end{equation}

\section{Upper bound of search-based Q-value estimation}
\label{app:upper bound}
Let function $f(s_0,a_0)$ be equal to formula (\ref{app:step2 formula}) and let $\pi^*_{\hat{Q}^*}$ be the optimal policy of formula (\ref{app:step2 formula}). We have:
\begin{equation}
    f(s_0,a_0) = \expec_{\substack{s_1,\cdots,s_H \sim \hat{P} \\ a_1,\cdots,a_{H-1} \sim \pi^*_{\hat{Q}^*}}} \left[ \sum_{t=0}^{H-1} \gamma^t \hat{r}(s_t, a_t) + \gamma^H \max_{a_H} \hat{Q}^*(s_H, a_H) \right]
\end{equation}

We make the following assumption similar to EfficientZero-v2~\citep{wang2024efficientzero}:
\begin{assumption}
     Assume the state transition, reward, and Q-value estimations error are upper bounded by $\epsilon_s, \epsilon_r, \epsilon_Q$ respectively. The error bound of each estimation is formulated as:
    \begin{gather}
        \max_{n\in[N],t\in[H(n)]}\mathbb{E}\left[\Vert\hat{s}_t-s_t\Vert\right]\leq\epsilon_s \\
        \max_{n\in[N],t\in[H(n)]}\mathbb{E}\left[\Vert \hat{r}(s_t)-r(s_t)\Vert\right]\leq\epsilon_r \\
        \max_{n\in[N],t\in[H(n)]}\mathbb{E}\left[\Vert \hat{Q}^*(s_t)-Q^*(s_t)\Vert\right]\leq\epsilon_Q
    \end{gather}
    \label{app:assumption of estimation errors}
\end{assumption}

\begin{theorem}
    Define $s_t,a_t$ to be the states and actions resulting from current policy using ground-truth dynamics $P$ and reward function $r$ and similarly define $s_t^{'},a_t^{'}$ using learned functions $\hat{P}$ and $\hat{r}$. Assume the learned reward function $\hat{r}$ to be $L_r$-Lipschitz and the estimated optimal Q-function $\hat{Q}^*$ to be $L_Q$-Lipschitz. Assume the estimation errors of learned functions are bounded as in Assumption \ref{app:assumption of estimation errors}. Then we have the error between search-based Q-value estimation $f(s_0,a_0)$ and ground-truth Q-value $Q^*(s_0,a_0)$ bounded as:
    \begin{equation}
        \Vert f(s_0,a_0) - Q^*(s_0,a_0) \Vert \leq \frac{1-\gamma^H}{1-\gamma}\epsilon_r + \left(\frac{\gamma-\gamma^H}{1-\gamma}\epsilon_r+\gamma^H\epsilon_Q\right)\epsilon_s + \gamma^H \epsilon_Q
    \end{equation}
    \label{app:theorem of upper bound}
\end{theorem}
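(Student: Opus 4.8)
The plan is to bound $f(s_0,a_0)-Q^*(s_0,a_0)$ by expanding both quantities as $H$-step rollouts and comparing them term by term. First I would invoke the $H$-step expansion of $Q^*$ established in Appendix~\ref{app:proof of eq11} (Equation (\ref{eq:n-step Q*})), which writes $Q^*(s_0,a_0)$ as an expectation over trajectories generated by the \emph{true} dynamics $P$, the \emph{true} reward $r$, and the terminal value $\max_{a_H}Q^*(s_H,a_H)$ under the greedy policy. Since $f(s_0,a_0)$ has the identical structure but with the \emph{learned} triple $(\hat P,\hat r,\hat Q^*)$, the task reduces to comparing two rollouts that differ only in which model generates the states and which functions score them. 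To sidestep the fact that the two objectives are maximized by different policies, I would use the subadditivity estimate $\lvert \max_\pi A(\pi)-\max_\pi B(\pi)\rvert \le \max_\pi \lvert A(\pi)-B(\pi)\rvert$, so that it suffices to fix a single policy $\pi$ and bound the gap between the estimated and true objectives along the two coupled trajectories $\{s'_t\}\sim\hat P$ and $\{s_t\}\sim P$, both started from the common $s_0$.

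With the policy fixed, I would write the difference as
\begin{equation}
f-Q^* = \sum_{t=0}^{H-1}\gamma^t\,\mathbb{E}\big[\hat r(s'_t,a_t)-r(s_t,a_t)\big] + \gamma^H\,\mathbb{E}\big[\max_{a}\hat Q^*(s'_H,a)-\max_{a}Q^*(s_H,a)\big],
\end{equation}
and bound each summand by a triangle-inequality split that separates a \emph{state-discrepancy} error from a \emph{function-estimation} error. For the reward terms I would write $\hat r(s'_t)-r(s_t)=[\hat r(s'_t)-\hat r(s_t)]+[\hat r(s_t)-r(s_t)]$, bounding the first bracket by $L_r\lVert s'_t-s_t\rVert$ through the Lipschitz hypothesis and the second by $\epsilon_r$ through Assumption~\ref{app:assumption of estimation errors}. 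For the terminal term I would first apply $\lvert\max_a g(a)-\max_a h(a)\rvert\le\max_a\lvert g(a)-h(a)\rvert$ and then the same two-step split, giving $L_Q\lVert s'_H-s_H\rVert+\epsilon_Q$. Taking expectations and using $\mathbb{E}\lVert s'_t-s_t\rVert\le\epsilon_s$ (with the $t=0$ term vanishing because both trajectories start at $s_0$) collapses the state-discrepancy contributions into the geometric sum $\sum_{t=1}^{H-1}\gamma^t=\frac{\gamma-\gamma^H}{1-\gamma}$, while the pure estimation errors sum to $\sum_{t=0}^{H-1}\gamma^t\epsilon_r=\frac{1-\gamma^H}{1-\gamma}\epsilon_r$ together with the terminal $\gamma^H\epsilon_Q$. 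Assembling these yields the claimed bound.

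The main obstacle, and the step I would treat most carefully, is the reduction across the two different maximizing policies: the constrained maximizer $\pi^*_{\hat Q^*}$ of the estimated objective need not coincide with the greedy optimal policy of the true MDP, so the subadditivity argument must be arranged so that the \emph{same} policy drives both rollouts before the coupling $\{s'_t\}$ versus $\{s_t\}$ is introduced. A secondary subtlety is that Assumption~\ref{app:assumption of estimation errors} supplies a uniform per-timestep bound $\mathbb{E}\lVert \hat s_t-s_t\rVert\le\epsilon_s$ directly, rather than a one-step model error that would otherwise compound through the horizon; I would rely on this assumed form so that no extra accumulation factor appears, and I would note that it is the Lipschitz constants $L_r,L_Q$ (the same ones entering the companion condition (\ref{eq:condition for lower bound})) that multiply $\epsilon_s$ in the middle term of the bound.
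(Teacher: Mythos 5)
Your proof is correct and follows essentially the same route as the paper's: both couple the two $H$-step rollouts under a common policy, apply the triangle inequality per timestep, split each discrepancy into a Lipschitz state-mismatch term plus a function-estimation term (with the $t=0$ term contributing only $\epsilon_r$ since both trajectories share $s_0$), and collapse the geometric sums — your max-subadditivity over policies and over $a_H$ is simply a more careful rendering of what the paper does by fiat (a single ``current policy'' driving both rollouts) and via argmax notation. One point worth flagging: the middle coefficient your assembly produces, $\left(\frac{\gamma-\gamma^H}{1-\gamma}L_r+\gamma^H L_Q\right)\epsilon_s$, is the correct one and matches the paper's penultimate line, whereas the stated theorem (and the paper's final line) writes $\left(\frac{\gamma-\gamma^H}{1-\gamma}\epsilon_r+\gamma^H\epsilon_Q\right)\epsilon_s$, substituting the error bounds for the Lipschitz constants — an apparent typo in the paper that your derivation correctly avoids.
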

\begin{proof}
\begin{align}
    & \Vert f(s_0,a_0) - Q^*(s_0,a_0) \Vert \nonumber \\
    =& \left\Vert \expec \left[ \sum_{t=0}^{H-1} \gamma^t \hat{r}(s_t^{'}, a_t^{'}) + \gamma^H \max_{a_H} \hat{Q}^*(s_H^{'}, a_H) \right] - \expec \left[ \sum_{t=0}^{H-1} \gamma^t r(s_t, a_t) + \gamma^H \max_{a_H} Q^*(s_H, a_H) \right] \right\Vert \nonumber \\
    \leq & \expec \left[\left\Vert \hat{r}(s_0, a_0) - r(s_0, a_0) + \sum_{t=1}^{H-1} \gamma^t \left(\hat{r}(s_t^{'}, a_t^{'}) - r(s_t, a_t)\right) + \gamma^H \left( \max_{a_H}\hat{Q}^*(s_H^{'}, a_H) - \max_{a_H}Q^*(s_H, a_H) \right) \right\Vert\right] \nonumber \\
    \leq & \expec\left[ \left\Vert \hat{r}(s_0, a_0) - r(s_0, a_0) \right\Vert \right] + \sum_{t=1}^{H-1} \gamma^t \left\Vert \hat{r}(s_t^{'}, a_t^{'}) - r(s_t, a_t) \right\Vert + \gamma^H \expec\left[ \left\Vert \max_{a_H}\hat{Q}^*(s_H^{'}, a_H) - \max_{a_H}Q^*(s_H, a_H) \right\Vert \right] \nonumber \\
    \leq & \epsilon_r + \sum_{t=1}^{H-1} \gamma^t \left\Vert \hat{r}(s_t^{'}, a_t^{'}) - r(s_t, a_t) \right\Vert + \gamma^H \expec\left[ \left\Vert \max_{a_H}\hat{Q}^*(s_H^{'}, a_H) - \max_{a_H}Q^*(s_H, a_H) \right\Vert \right] \label{app:ineq of estimation error}
\end{align}

For the second term in inequality (\ref{app:ineq of estimation error}):
\begin{align}
    & \left\Vert \hat{r}(s_t^{'}, a_t^{'}) - r(s_t, a_t) \right\Vert \nonumber \\
    =& \left\Vert \hat{r}(s_t^{'}, a_t^{'}) - \hat{r}(s_t, a_t) + \hat{r}(s_t, a_t) - r(s_t, a_t)  \right\Vert \nonumber \\
    \leq& \left\Vert \hat{r}(s_t^{'}, a_t^{'}) - \hat{r}(s_t, a_t) \right\Vert + \left\Vert \hat{r}(s_t, a_t) - r(s_t, a_t) \right\Vert \nonumber \\
    \leq& L_r \left\Vert s_t^{'} - s_t \right\Vert + \epsilon_r \nonumber \\
    \leq& L_r\epsilon_s + \epsilon_r \label{app:bound of second term}
\end{align}

For the third term in inequality (\ref{app:ineq of estimation error}), let $a_H^1=\arg\max_{a_H}\hat{Q}^*(s_H^{'}, a_H)$ and $a_H^2=\arg\max_{a_H}Q^*(s_H, a_H)$. Then we have:
\begin{align}
    &\expec\left[ \left\Vert \max_{a_H}\hat{Q}^*(s_H^{'}, a_H) - \max_{a_H}Q^*(s_H, a_H) \right\Vert \right] \nonumber \\
    =& \expec\left[ \left\Vert \hat{Q}^*(s_H^{'}, a_H^1) - Q^*(s_H, a_H^2) \right\Vert \right] \nonumber \\
    =& \expec\left[ \left\Vert \hat{Q}^*(s_H^{'}, a_H^1) - \hat{Q}^*(s_H, a_H^2) + \hat{Q}^*(s_H, a_H^2) - Q^*(s_H, a_H^2) \right\Vert \right] \nonumber \\
    \leq& \expec\left[ \left\Vert \hat{Q}^*(s_H^{'}, a_H^1) - \hat{Q}^*(s_H, a_H^2)\right\Vert \right] + \expec\left[ \left\Vert \hat{Q}^*(s_H, a_H^2) - Q^*(s_H, a_H^2) \right\Vert \right] \nonumber \\
    \leq& L_Q \left\Vert s_H^{'} - s_H \right\Vert + \epsilon_Q \nonumber \\
    \leq& L_Q \epsilon_s + \epsilon_Q \label{app:bound of third term}
\end{align}

Substitute inequalities (\ref{app:bound of second term}) and (\ref{app:bound of third term}) into (\ref{app:ineq of estimation error}):
\begin{align}
    & \Vert f(s_0,a_0) - Q^*(s_0,a_0) \Vert \nonumber \\
    \leq& \epsilon_r + \sum_{t=1}^{H-1} \gamma^t (L_r\epsilon_s + \epsilon_r) + \gamma^H (L_Q \epsilon_s + \epsilon_Q) \nonumber \\
    =& \frac{1-\gamma^H}{1-\gamma}\epsilon_r + \left(\frac{\gamma-\gamma^H}{1-\gamma}\epsilon_r+\gamma^H\epsilon_Q\right)\epsilon_s + \gamma^H \epsilon_Q
\end{align}
\end{proof}

\paragraph{Analysis.} We expect the search-based Q-values $f(s,a)$ have an upper error bound no greater than the estimated Q-values $\hat{Q}^*(s,a)$, which is formulated as the following inequality:
\begin{equation}
    \frac{1-\gamma^H}{1-\gamma}\epsilon_r + \left(\frac{\gamma-\gamma^H}{1-\gamma}\epsilon_r+\gamma^H\epsilon_Q\right)\epsilon_s + \gamma^H \epsilon_Q
    \leq \epsilon_Q
    \label{app:expected upper bound}
\end{equation}

Based on inequality (\ref{app:expected upper bound}), we derive the following condition:
\begin{equation}
    \frac{1}{1-\gamma}\epsilon_r + \left( \frac{\gamma}{1-\gamma}\frac{L_r-\gamma^HL_Q}{1-\gamma^H}-\frac{L_r-L_Q}{1-\gamma}\frac{\gamma^H}{1-\gamma^H} \right)\epsilon_s \leq \epsilon_Q
    \label{app:condition for better search}
\end{equation}

The inequality (\ref{app:condition for better search}) means that if the weighted sum of rewards estimation error $\epsilon_r$ and state transition estimation error $\epsilon_s$ are less than or equal to the Q-values estimation error $\epsilon_Q$, then the search-based optimal Q-values have a lower upper-bound of error than estimated optimal Q-values.

\section{Games}
\label{app:intro of games}
We select 20 Atari games maintaining the difficulty distribution of full Atari 2600 games defined by \cite{gulcehre2020rl}, which includes 9 easy games, 9 medium games, and 2 hard games. We use 15 out of 20 games for training and the remaining 5 for OOD generalization experiments. The 15 training games are: \texttt{Phoenix}, \texttt{Centipede}, \texttt{SpaceInvaders}, \texttt{Carnival}, \texttt{NameThisGame}, \texttt{Assault}, \texttt{Atlantis}, \texttt{DemonAttack}, \texttt{BeamRider}, \texttt{ChopperCommand}, \texttt{Seaquest}, \texttt{TimePilot}, \texttt{StarGunner}, \texttt{Berzerk}, \texttt{Zaxxon}. The 5 held-out games are: \texttt{Pong}, \texttt{Robotank}, \texttt{YarsRevenge}, \texttt{Gravitar}, \texttt{MsPacman}. Details about the size of action spaces and game difficulties are shown in Table \ref{tab:intro of atari games}.

\begin{table}[ht]
\caption{Atari Games: Name, Game difficulty, Action Space, and Type.}
\vspace{-10pt}
\label{tab:intro of atari games}
\begin{center}
\begin{small}
\centering
\begin{tabular}{lccc}
\toprule
Game & Difficulty & Action Space & Type \\
\midrule
Assault & Medium & 7 & Train \\
Atlantis & Hard & 18 & Train \\
BeamRider & Medium & 9 & Train \\
Berzerk & Hard & 18 & Train \\
Carnival & Medium & 6 & Train \\
Centipede & Medium & 18 & Train \\
ChopperCommand & Easy & 18 & Train \\
DemonAttack & Easy & 6 & Train \\
Gravitar & Easy & 18 & Fine-tune \\
MsPacman & Medium & 9 & Fine-tune \\
NameThisGame & Easy & 6 & Train \\
Phoenix & Easy & 8 & Train \\
Pong & Medium & 6 & Fine-tune \\
Robotank & Medium & 18 & Fine-tune \\
Seaquest & Easy & 18 & Train \\
SpaceInvaders & Easy & 6 & Train \\
StarGunner & Medium & 18 & Train \\
TimePilot & Easy & 10 & Train \\
YarsRevenge & Medium & 18 & Fine-tune \\
Zaxxon & Easy & 18 & Train \\
\bottomrule
\end{tabular}
\end{small}
\end{center}
\end{table}

\section{Experimental Details} \label{app:experiment details}
\subsection{Implement Details} \label{app:implement details}
\subsubsection{JOWA}
We implement JOWA based on the codes of IRIS~\citep{micheli2022transformers}\footnote{https://github.com/eloialonso/iris}. We train the tokenizer VQ-VAE using the following loss function:
\begin{equation*}
    \mathcal{L}(E, D, \mathcal{E}) = \Vert x - D(z) \Vert_{1} + \Vert\mathrm{sg}(E(x)) - \mathcal{E}(z)\Vert^{2}_{2} + \Vert\mathrm{sg}(\mathcal{E}(z)) - E(x)\Vert^{2}_{2} + \mathcal{L}_{perceptual}(x, D(z))
\end{equation*}
where $E, D, \mathcal{E}$ are encoder, decoder, and embedding table respectively. $\mathrm{sg}(\cdot)$ is the stop-gradient operator. The last term is the perceptual loss~\citep{johnson2016perceptual}. We list the hyperparameters of VQ-VAE in Table \ref{tables:autoencoder} and \ref{tables:autoencoder-embedding}. After the first stage of pretraining, the VQ-VAE is frozen.

\begin{table}[htbp]
    \centering
    \begin{minipage}{.48\textwidth}
        \centering
        \small
        \caption{Encoder / Decoder hyperparameters. We list the hyperparameters for the encoder, the same ones apply for the decoder.}
  \label{tables:autoencoder}
  \begin{tabular}{lc}
    \toprule
    Hyperparameter     & Value \\
    \midrule
    Frame dimensions (h, w) & $84 \times 84$ \\
    Layers & 3 \\
    Residual blocks per layer & 2 \\
    Channels in convolutions & 64 \\
    Self-attention layers at resolution & 6 / 12 \\
    \bottomrule
  \end{tabular}
    \end{minipage}%
    \hfill
    \begin{minipage}{.48\textwidth}
        \centering
        \small
        \caption{Embedding table hyperparameters.}
  \label{tables:autoencoder-embedding}
  \begin{tabular}{lc}
    \toprule
    Hyperparameter     & Value \\
    \midrule
    Vocabulary size & 2048 \\
    Tokens per frame (K) & 36 \\
    Token embedding dimension & 512 \\
    \bottomrule
  \end{tabular}
    \end{minipage}
\end{table}

In addition to the vocabulary embedding and position embedding, we add a learnable task embedding for observation tokens and action tokens respectively. Our transformer are based on minGPT\footnote{https://github.com/karpathy/minGPT} with FlashAttention~\citep{dao2022flashattention} for acceleration. The hyperparameters of JOWA's transformer backbone are listed in Table \ref{tables:jowa transformer same hyperparameters} and \ref{tables:jowa transformer different hyperparameters}.



\begin{table}[htbp]
    \centering
    \begin{minipage}{.48\textwidth}
        \centering
        \small
        \caption{Same hyperparameters of transformer for 3 JOWA variants.}
  \label{tables:jowa transformer same hyperparameters}
  \begin{tabular}{lc}
    \toprule
    Hyperparameter     & Value \\
    \midrule
    max sequence tokens & 296 \\
    dropout rate & 0.1 \\
    \bottomrule
  \end{tabular}
    \end{minipage}%
    \hfill
    \begin{minipage}{.48\textwidth}
        \centering
        \small
        \caption{Different hyperparameters of transformer for 3 JOWA variants.}
  \label{tables:jowa transformer different hyperparameters}
  \begin{tabular}{lccc}
    \toprule
    Model & Layers & Hidden size & Heads \\
    \midrule
    JOWA-40M & 4 & 512 & 8 \\
    JOWA-70M & 6 & 768 & 12 \\
    JOWA-150M & 12 & 768 & 12 \\
    \bottomrule
  \end{tabular}
    \end{minipage}
\end{table}

\begin{table}[ht]
  \caption{Hyperparameters of Q-heads for 3 JOWA variants.}
  \label{tables:jowa q-head hyperparameters}
  \centering
  \small
  \begin{tabular}{lccccc}
    \toprule
    Model & Layers & MLP Hidden dimension & Number of heads & Dropout \\
    \midrule
    JOWA-40M & 3 & 768 & 1 & 0.01 \\
    JOWA-70M & 3 & 1024 & 1 & 0.01 \\
    JOWA-150M & 3 & 1792 & 3 & 0.01 \\
    \bottomrule
  \end{tabular}
\end{table}

The observation predictor, reward predictor, and terminal predictor are 2-layers MLP. The Q-heads are MLP with dropout, layer normalization, and Mish activations from \cite{hansen2024tdmpc2}. The hyperparameters of Q-heads for JOWA are shown in Table \ref{tables:jowa q-head hyperparameters}. The training hyperparameters of JOWA are shown in Table \ref{tables:Training hyperparameters of JOWA}.

\begin{table}[ht]
  \small
  \caption{Training hyperparameters of JOWA.}
  \label{tables:Training hyperparameters of JOWA}
  \centering
  \begin{tabular}{lc}
    \toprule
    Hyperparameter     & Value \\
    \midrule
    Optimizer (VQ-VAE) & Adam \\
    Optimizer (except VQ-VAE) & AdamW \\
    Learning rate (VQ-VAE) & 0.0001 \\
    Learning rate (except VQ-VAE, stage 1) & 0.0001 \\
    Learning rate (except VQ-VAE, stage 2) & 0.00005 \\
    Batch size (VQ-VAE) & 2048 \\
    Batch size (except VQ-VAE) & 512 \\
    Weight decay (except VQ-VAE) & 0.01 \\
    Gradient clip & 1.0 \\
    Discount factor ($\gamma$) & 0.99 \\
    Target Q update frequency & 1000 \\
    Distributional Q & [-10, 30] \\
    Number of atoms & 51 \\
    Coefficient of CQL ($\alpha$) & 0.1 \\
    Coefficient of $\mathcal{L}_\text{world}$ ($\beta$) & 0.1 \\
    \bottomrule
  \end{tabular}
\end{table}

\begin{table}[htbp]
    \centering
        \small
        \caption{Hyperparameters of transformer for 2 MGDT variants.}
  \label{tables:mgdt transformer hyperparameters}
        \begin{tabular}{lccccc}
    \toprule
    Model & Layers & Hidden size & Heads \\
    \midrule
    MGDT-40M & 6 & 768 & 12 \\
    MGDT-200M & 10 & 1280 & 20 \\
    \bottomrule
  \end{tabular}
\end{table}


\begin{table}[ht]
    \centering
    \begin{minipage}{.52\textwidth}
        \centering
        \small
        \caption{Training hyperparameters of Scaled-QL.}
      \label{tables:Training hyperparameters of SQL}
        \begin{tabular}{lc}
    \toprule
    Hyperparameter     & Value \\
    \midrule
    Optimizer & Adam \\
    Learning rate & 0.0002 \\
    Batch size & 512 \\
    Gradient clip & 1.0 \\
    Discount factor ($\gamma$) & 0.99 \\
    Target Q update frequency & 2000 \\
    Distributional Q & [-20, 20] \\
    Number of atoms & 51 \\
    Coefficient of CQL ($\alpha$) & 0.05 \\
    n-step returns & 3 \\
    \bottomrule
  \end{tabular}
    \end{minipage}%
    \hfill
    \begin{minipage}{.44\textwidth}
        \centering
        \small
        \caption{Evaluation settings of Atari.}
  \label{tables:Evaluation settings of Atari}
        \begin{tabular}{lc}
    \toprule
    Hyperparameter     & Value \\
    \midrule
    Sticky actions & No \\
    Grey-scaling & True \\
    Observation down-sampling & (84, 84) \\
    Frames stacked & 4 \\
    Frame skip (Action repetitions) & 4 \\
    Terminal condition & Game Over \\
    Max frames per episode & 108K \\
    Evaluation noise $\epsilon_\text{eval}$ & 0.001 \\
    \bottomrule
  \end{tabular}
    \end{minipage}
\end{table}

\subsubsection{MTBC}
We implement MTBC based on JOWA. We remove the observation predictor, reward predictor, and terminal predictor. We change the output dimension of Q-heads to 18 and train the heads as a 18-class classification problem. All hyperparameters are kept the same as JOWA.

\subsubsection{EDT}
We use the official code for EDT\footnote{https://github.com/kristery/Elastic-DT}. We implement EDT-200M based on the architecture configuration of MGDT-200M, which is shown in Table \ref{tables:mgdt transformer hyperparameters}. We change the batch size to 512 and keep other hyperparameters the same as its original configuration. We enable data augmentation (random cropping and random rotation) for EDT.

\subsubsection{MGDT}
We implement MGDT based on the codes of EDT. We use $\{o_{t+i},R_{t+i},a_{t+i},r_{t+i}\}_{i=0}^3$ as the input sequences, remove the expectile regression loss $\mathcal{L}_{\max}$ and observation prediction loss $\mathcal{L}_\text{observation}$, and add the reward prediction loss to rewrite the codes of EDT into MGDT. We enable data augmentation (random cropping and random rotation) for MGDT. The hyperparameters of transformer for two MGDT variants are listed in Table \ref{tables:mgdt transformer hyperparameters}.


\subsubsection{Scaled-QL}
We implement a pytorch version of Scaled-QL from scratch, referring to the jax version of its official preliminary codes\footnote{https://tinyurl.com/scaled-ql-code}. We use ResNet-101 as the representation backbone, followed by 3-layers MLP with 1024 hidden neurons and an output layer. We replace the batch normalization in ResNet with group normalization and use a learnable spatial embeddings to aggregate the outputs of the ResNet instead of global mean pooling. Before the output layer, we normalize the feature $e$ as $\frac{e}{\Vert e \Vert^2}$. The training hyperparameters of Scaled-QL are listed in Table \ref{tables:Training hyperparameters of SQL}.


For fair comparison, all methods are trained with the same batch size of 512 for 1.75M gradient steps. For reporting results, we report the performance of the agent at the end of pretraining.

\subsection{Fine-tuning protocol}
\label{app:fine-tune protocol}
We uniformly draw 5k transitions from expert-level DQN-Replay~\citep{agarwal2020optimistic} (last 20\% of the original dataset) for each held-out game. Each game was fine-tuned separately to measure the model’s transfer performance for a fixed game. we fine-tuned all methods using a batch size of 32 and learning rate of 0.00005 for 50k gradient steps. For reporting results, we report the performance of the agent snapshot that obtain the highest score during fine-tuning.

For JOWA in the second fine-tuning stage, we set both the planning horizon and the beam width to 2 for all fine-tuning experiments. Thus we sample batch of 6-steps segments, using planning algorithm to synthesis the last 2 steps. For other baselines, we enable random cropping and random rotation for data augmentation.

\subsection{Evaluation protocol}
\label{app:eval protocol}
For all methods, each game score is calculated by averaging over 16 model rollout episode trials. To reduce inter-trial variability, we do not use sticky actions during evaluation following \cite{lee2022multi, kumar2023offline}. Following standard protocols on Atari, we evaluate a noised version of the policy with an epsilon-greedy scheme, with $\epsilon_\text{eval}=0.001$. The evaluation settings of Atari are shown in Table \ref{tables:Evaluation settings of Atari}.

For the expert action inference of MGDT and EDT, we set the inverse temperature $\kappa$ to 10. For the planning of JOWA, we set the planning horizon $H$ to 2 for all games. The beam width are set according to the size of valid action space of each game. Specifically, we set beam width $K$ to 2 if the valid action space size is less than 10, otherwise we set $K$ in $\{3,4\}$. The planning hyperparameters for each game are shown in Table \ref{tables:planning hyperparameters of JOWA}.


\begin{table}[ht]
  \caption{Planning hyperparameters of JOWA during evaluation.}
  \label{tables:planning hyperparameters of JOWA}
  \centering
  \small
  \begin{tabular}{lccc}
    \toprule
    Game & planning horizon & beam width & Action space \\
    \midrule
    Assault & 2 & 2 & 7 \\
    Atlantis & 2 & 3 & 18 \\
    BeamRider & 2 & 2 & 9 \\
    Berzerk & 2 & 4 & 18 \\
    Carnival & 2 & 2 & 6 \\
    Centipede & 2 & 4 & 18 \\
    ChopperCommand & 2 & 4 & 18 \\
    DemonAttack & 2 & 2 & 6 \\
    NameThisGame & 2 & 2 & 6 \\
    Phoenix & 2 & 2 & 8 \\
    Seaquest & 2 & 3 & 18 \\
    SpaceInvaders & 2 & 2 & 6 \\
    StarGunner & 2 & 3 & 18 \\
    TimePilot & 2 & 3 & 10 \\
    Zaxxon & 2 & 3 & 18 \\
    \bottomrule
  \end{tabular}
\end{table}

\section{More experiments and Raw scores}
\label{app:raw score}
We summarize the raw scores of fine-tuning experiments, ablation studies on planning algorithms and training choices in Table \ref{tab:raw score of generalizes}, \ref{tab:raw score of MCTS vs. ours} and \ref{tab:raw scores of training choices} respectively.

\begin{table}[htbp]
    \centering
    \small
    \caption{Offline fine-tuning performance on unseen games using 5k transitions, measured
 in terms of DQN-normalized score, following \cite{lee2022multi, kumar2023offline}.}
    \vspace{-5pt}
    \begin{tabular}{lrr|rrrrrr}
        \toprule
        \multirow{2}{*}{Game} & \multirow{2}{*}{Random} & \multirow{2}{*}{DQN} & MTBC & MGDT & EDT & SQL & JOWA & JOWA-150M \\
         & & & 120M & 200M & 200M & 80M & 150M & \multicolumn{1}{c}{(scratch)} \\
        \midrule
        Gravitar & 173.0 & 473.0 & 35.7 & 250.0 & 253.3 & 137.5 & 273.3 & 83.3 \\
        MsPacman & 307.3 & 3085.6 & 905.0 & 1290.3 & 1210.7 & 1040.2 & 2016.7 & 786.7 \\
        Pong & -20.7 & 19.5 & 5.8 & 9.7 & 11.3 & 13.7 & 17.7 & 8.8 \\
        Robotank & 2.2 & 63.9 & 6.8 & 16.0 & 15.5 & 19.7 & 25.0 & 11.0 \\
        YarsRevenge & 3092.9 & 18089.9 & 7987.5 & 10886.3 & 11276.9 & 10838.5 & 17506.2 & 6507.0 \\
        \midrule
        Mean & 0.000 & 1.000& 0.164 & 0.422 & 0.430 & 0.360 & \textbf{0.647} & 0.196 \\
        Median & 0.000 & 1.000& 0.215 & 0.354 & 0.325 & 0.284 & \textbf{0.615} & 0.173 \\
        IQM & 0.000 & 1.000 & 0.205 & 0.377 & 0.380 & 0.355 & \textbf{0.647} & 0.181 \\
        \bottomrule
    \end{tabular}
    \label{tab:raw score of generalizes}
\end{table}

\begin{table}[htbp]
    \centering
    \small
    \caption{Raw scores on 7 games for JOWA-150M evaluated with different planning algorithms.}
    \vspace{-5pt}
    \begin{tabular}{lrr|rrr}
        \toprule
        Game & Random & Human & w/o planning & MCTS & Ours \\
        \midrule
        BeamRider & 363.9 & 16926.5 & 864.5 & 1137.0 & \textbf{3498} \\
        Berzerk & 123.7 & 2630.4 & 396.9 & 440.0 & \textbf{739} \\
        Carnival & 0 & 3800 & \textbf{5560.0} & 3340.0 & 5316 \\
        ChopperCommand & 811.0 & 7387.8 & 806.2 & 1850.0 & \textbf{3812.5} \\
        Seaquest & 68.4 & 42054.7 & 267.5 & 760.0 & \textbf{2725} \\
        TimePilot & 3568 & 5229.2 & 662.5 & \textbf{4100.0} & 3669 \\
        Zaxxon & 32.5 & 9173.3 & 12.5 & 50.0 & \textbf{2163} \\
        \midrule
        \#Mean HNS & 0.000 & 1.000 & -0.02 & 0.221 & \textbf{0.378} \\
        IQM HNS & 0.000 & 1.000 & 0.03 & 0.134 & \textbf{0.237} \\
        \bottomrule
    \end{tabular}
    \label{tab:raw score of MCTS vs. ours}
\end{table}

\begin{table}[h]
\caption{Raw scores on the 6 Atari games for various training choices. The mean, median, and IQM human-normalized score are shown in the last 3 rows and the best scores are markded in bold.}
\vspace{-10pt}
\begin{center}
\begin{small}
\centering
\scalebox{0.82}{
\begin{tabular}{l|r|rrrr|rr|r|r}
\toprule
\multirow{2}{*}{Game} & \multirow{2}{*}{Origin} & \multicolumn{4}{c|}{Different training losses} & \multicolumn{2}{c|}{Q-heads ensemble} & \multicolumn{1}{l|}{No task}  & \multicolumn{1}{l}{Synthetic data}  \\
\cline{3-8}
 & & No CQL & No $\mathcal{L}_\text{world}$ & $\operatorname{sg}(\mathcal{L}_\text{action})$ & MSE & Equal & Random & \multicolumn{1}{l|}{embedding} & \multicolumn{1}{l}{in pretraining} \\
\midrule
Assault & 1423.5 & 857.9 & 1650 & 452.7 & 637.6 & 1628.8 & 1258.6 & 1428.5 & 655.7 \\ 
Carnival & 5560 & 4144.6 & 5560 & 2120 & 620 & 5154.3 & 4160 & 5974.2 & 3492.9 \\ 
Centipede & 5018.9 & 1494.1 & 8146 & 5568.7 & 4097.3 & 5592.5 & 6450 & 3725.4 & 3253 \\ 
NameThisGame & 12208.1 & 9307.1 & 4407.3 & 2108.8 & 1315 & 12148.6 & 9420 & 7700 & 5080 \\ 
Phoenix & 4740 & 140 & 2036.7 & 1920 & 1190 & 4610 & 4941.7 & 4020 & 193.3 \\ 
SpaceInvaders & 1201.7 & 605 & 323.4 & 539.3 & 260 & 958.4 & 1283.3 & 575.4 & 786.3 \\ 
\midrule
Mean HNS & 1.183 & 0.613 & 0.917 & 0.293 & 0.189 & \textbf{1.209} & 1.026 & 0.964 & 0.448 \\ 
Median HNS & \textbf{1.078} & 0.696 & 0.489 & 0.304 & 0.118 & 0.975 & 0.921 & 0.721 & 0.452 \\
IQM HNS & \textbf{1.123} & 0.637 & 0.659 & 0.307 & 0.126 & 1.049 & 0.931 & 0.824 & 0.464 \\ 
\bottomrule
\end{tabular}
 }
\end{small}
\end{center}
\label{tab:raw scores of training choices}
\vspace{-5pt}
\end{table}

\begin{table}[htbp]
    \centering
    \small
    \caption{Offline fine-tuning performance of JOWA-150M on unseen games with various data quality, measured in terms of DQN-normalized score, following \cite{lee2022multi, kumar2023offline}.}
    \vspace{-5pt}
    \begin{tabular}{lrr|rrr}
        \toprule
        Game & Random & DQN & Expert & Suboptimal & Highly-suboptimal \\
        \midrule
        Gravitar & 173.0 & 473.0 & 273.3 & 317.8 & 296.0 \\
        MsPacman & 307.3 & 3085.6 & 2016.7 & 1005.5 & 1126.8 \\
        Pong & -20.7 & 19.5 & 17.7 & 13.2 & 13.8 \\
        Robotank & 2.2 & 63.9 & 25.0 & 14.6 & 8.5 \\
        YarsRevenge & 3092.9 & 18089.9 & 17506.2 & 15085.0 & 9755.4 \\
        \midrule
        Mean & 0.000 & 1.000& 0.647 & 0.516 & 0.422 \\
        Median & 0.000 & 1.000& 0.615 & 0.483 & 0.410 \\
        IQM & 0.000 & 1.000 & 0.647 & 0.511 & 0.383 \\
        \bottomrule
    \end{tabular}
    \label{tab:raw score of non-expert fine-tuning}
    \vspace{-10pt}
\end{table}

\paragraph{Details of the comparison of planning algorithms.}
We implement both planning algorithms in python rather than C++ for fair speed comparison. Note that Muzero-style~\citep{schrittwieser2020mastering} MCTS requires access to the $V$-function and policy networks $\pi$ while JOWA only estimates the optimal $Q$-function. Therefore, we conduct a grid search on the following choices for MCTS: 

\begin{enumerate}[leftmargin=8mm]
\setlength\itemsep{0.1em}
    \item Compute the $V$-value using \texttt{Q.mean()} or \texttt{Q.max()}.
    \item We employ an energy-based policy to compute action probability, \textit{i.e.}, $\pi(\cdot|s) = \operatorname{softmax}(Q(s, \cdot)/t)$, where $t$ is the temperature. We search $t$ in \{0.01, 0.1, 0.5, 0.7, 0.8, 0.9, 1, 2, 3, 5, 10\}.
    \item Use the action of most visited or most valuable children of the root state as the optimal action. 
    \item Search the max depth of the tree $H$ in $[1, 7]$.
\end{enumerate}
 
Finally, we use the configuration with \texttt{V=Q.max()}, $t=0.9$, and action of most valuable children as the optimal action for all games while searching $H$ in \{1, 2, 4, 6\} for each game. Even though we believe that we conduct a sufficiently adequate hyperparameter search, MCTS still performs worse than ours, as shown in Table \ref{tab:raw score of MCTS vs. ours}.

We compare these two algorithms on 7 games where bare JOWA-150M (\textit{i.e.}, without planning) performs poorly. The results show that our planning algorithm exceeds MCTS by 71.0\% Mean HNS. Moreover, we find that MCTS is highly sensitive to the temperature $t$ and max depth $H$, and inappropriate values of hyperparameters can even degenerate the policy into a randomized policy. However, its long execution time makes it inconvenient to tune hyperparameters. Because of this, we have not yet found hyperparameters that can make MCTS perform reasonably on the other 8 pretrained games. We will include MCTS as an optional planning algorithm in our open-source evaluation code.

\paragraph{Ensemble of Q-value heads.}
We run JOWA-40M with ensembled Q-heads, where the return distribution is a weighted sum of distributions from multiple Q-heads. The original configuration of JOWA-40M uses a single Q-head. In this experiment, we set the number of Q-heads to 3 and compare 2 ensemble approaches: equal weights or random weights like REM~\citep{agarwal2020optimistic}. We report the results in the forth main column of Table \ref{tab:raw scores of training choices}. Although the equal weighted ensemble slightly outperforms random weights, we have not observed significant improvements of multi Q-heads over single Q-head. We leave the better way of distributional-Q ensemble for future work.

\vspace{-5pt}
\paragraph{Effects of task embedding.} 
We run JOWA-40M without task embedding, using only the sum of vocabulary embedding and position embedding as input for transformer. We report results in the fifth main column of Table \ref{tab:raw scores of training choices}. Observe that the addition of task embedding improves 21.9\% and 35.7\% for mean and median human-normalized score respectively.

\vspace{-5pt}
\paragraph{Fine-tuning with non-expert data.} 
Comparing with fine-tuning using expert-level data, we additionally fine-tune JOWA-150M using 5k suboptimal and highly-suboptimal transitions. Specifically, the suboptimal and the highly-suboptimal transitions are uniformly sampled from the complete and the initial 20\% of the DQN-Replay dataset, respectively. The results in Table \ref{tab:raw score of non-expert fine-tuning} show that the fine-tuning performance strongly correlates with data quality. The mean DQN-normalized scores for expert, suboptimal, and highly-suboptimal data are 0.647, 0.516, and 0.422 respectively.

\vspace{-5pt}
\paragraph{Discussion of emergent behaviors.}
Capability emergence refers to the phenomenon where large models suddenly exhibit new abilities or significantly enhanced performance, typically occurring after reaching certain scale thresholds in model size. For example, JOWA-150M achieves a significantly higher score than JOWA-70M and other baselines on \texttt{Zaxxon}, which seems like the ``emergent" phenomenon. However, after examining the reward of \texttt{Zaxxon}, we conclude that this ``emergent behavior" is actually attributable to the nonlinear reward function. In addition to the common rewards of around 100, \texttt{Zaxxon} also defines a huge reward of nearly 5000. JOWA-70M gets this huge reward in 1 of the 16 rollouts while JOWA-150M gets it in 7 rollouts during evaluation. When we clip the rewards to the range $[-1, 1]$, JOWA-70M and JOWA-150M achieve scores of 0.08 and 0.44 respectively, indicating no true ``emergent" phenomenon. \cite{schaeffer2024emergent} draws a similar conclusion that ``nonlinear or discontinuous metrics cause the emergent abilities".

\end{document}